\newtheorem{corollary}{Corollary}
\newtheorem{theorem}{Theorem}
\newtheorem{lemma}{Lemma}
\newcommand{\diam}{\operatorname{diam}}
\newcommand{\Lip}{\operatorname{Lip}}
\newcommand{\Loss}{\mathcal{L}}
\newcommand{\E}{\mathbb{E}}
\newcommand{\N}{\mathbb{N}}
\newcommand{\R}{\mathbb{R}}
\newcommand{\bt}{\bm{\beta}}
\newcommand{\hM}{\widehat{M}}
\newcommand{\hx}{\mathbf{\hat{\xi}}}
\newcommand{\m}{\mathbf{m}}
\newcommand{\n}{\bm{\nu}}
\newcommand{\om}{\bm{\omega}}
\newcommand{\one}{\mathbf{1}}
\newcommand{\ph}{\varphi}
\newcommand{\s}{\mathbf{s}}
\newcommand{\sg}{\varsigma}
\newcommand{\w}{\mathbf{w}}
\newcommand{\x}{\mathbf{x}}
\newcommand{\xx}{\bm{\xi}}
\newcommand{\y}{\mathbf{y}}
\renewcommand{\b}{\mathbf{b}}
\renewcommand{\hm}{\mathbf{\hat{m}}}
\renewcommand{\l}{\ell}
\newdimen{\algindent}
\algnewcommand\commentspace[1]{\hspace{#1\algindent}}
\newcommand{\onedtnsrshp}[1]{#1}
\newcommand{\threedtnsrshp}[3]{#1 $\times$ #2 $\times$ #3}
\title{Additive Noise Annealing and Approximation Properties of Quantized Neural Networks}
\author{%
  Matteo Spallanzani\\
  Università di Modena e Reggio Emilia\\
  {\small \texttt{matteo.spallanzani@unimore.it}}\\
  \And
  Lukas Cavigelli \\
  ETH Zürich\\
  {\small \texttt{cavigelli@iis.ee.ethz.ch}}\\
  \And
  Gian Paolo Leonardi\\
  Università di Trento\\
  {\small \texttt{gianpaolo.leonardi@unitn.it}}\\
  \And
  Marko Bertogna\\
  Università di Modena e Reggio Emilia\\
  {\small \texttt{marko.bertogna@unimore.it}}\\
  \And
  Luca Benini\\
  ETH Zürich \& Università di Bologna\\
  {\small \texttt{benini@iis.ee.ethz.ch}}
}
\begin{document}

\maketitle

\begin{abstract}
\noindent
We present a theoretical and experimental investigation of the quantization problem for artificial neural networks.
We provide a mathematical definition of quantized neural networks and analyze their approximation capabilities, showing in particular that any Lipschitz-continuous map defined on a hypercube can be uniformly approximated by a quantized neural network.
We then focus on the regularization effect of additive noise on the arguments of multi-step functions inherent to the quantization of continuous variables.
In particular, when the expectation operator is applied to a non-differentiable multi-step random function, and if the underlying probability density is differentiable (in either classical or weak sense), then a differentiable function is retrieved, with explicit bounds on its Lipschitz constant.
Based on these results, we propose a novel gradient-based training algorithm for quantized neural networks that generalizes the straight-through estimator, acting on noise applied to the network's parameters.
We evaluate our algorithm on the CIFAR-10 and ImageNet image classification benchmarks, showing state-of-the-art performance on AlexNet and MobileNetV2 for ternary networks.
The code to replicate our experiments and apply the algorithm is available online \footnote{\href{https://github.com/spallanzanimatteo/QuantLab}{{https://github.com/spallanzanimatteo/QuantLab}}}.
\end{abstract}

\section{Introduction}
\label{sec:introduction}
Deep learning (DL) \cite{Hinton2007, Bengio2009} is the backbone of present-day artificial intelligence (AI).
As general function approximators, deep neural networks (DNNs) are used as critical components whenever analytical algorithms to solve a given task are not available.
For example, specific DNN architectures have been developed to perform object detection \cite{Redmon2018, Pan2018}, autonomous navigation \cite{Gandhi2017, Loquercio2018} or decision making in both complete and incomplete information games \cite{Silver2017, Vinyals2019}.

Designed to optimize statistical fit metrics, these models often require billions of parameters and multiply-accumulate (MAC) operations to perform inference on a single data sample.
These properties translate into often prohibitive energy and latency requirements for resource-constrained devices.
Many real-world applications in the context of embedded intelligence (internet of things, smart cities) have additional constraints that need to be satisfied, yielding more complex optimization metrics.
Energy-aware applications must satisfy limited peak-power constraints or average energy consumption constraints \cite{Mayer2019}.
Real-time applications must satisfy specific inference rate constraints \cite{Quinones2018}.
Many applications are both energy-constrained and real-time constrained \cite{Palossi2019}.
Constrained AI has thus become a very active research field.

In particular, constrained DL is concerned with the development of models which do not exceed a given computational cost and have a limited memory footprint, a property that is a function of both the number of parameters and the precision of their hardware representations.
The number of parameters and the computational complexity have been extensively studied in the context of network topologies, in particular for convolutional neural networks (CNNs).
CNNs have received considerable attention from the advent of AlexNet \cite{Krizhevsky2012}.
In particular, the simplification of filters used \cite{Simonyan2015} and the introduction of residual connections modelled by smaller networks \cite{He2016, Lin2014} have opened the way to accurate modern architectures for computer vision applications such as MobileNetV2 \cite{Sandler2018}, a CNN designed explicitly for mobile devices.
More recently, advanced neural architecture search algorithms have been used to develop extremely hardware-friendly topologies \cite{Wu2018}.
Hardware-related compression has recently attracted much interest \cite{Agustsson2017, Cavigelli2018}, evolving into the field of quantized neural networks (QNNs).
QNNs use low-bitwidth operands to reduce the models' memory footprints and simplify the hardware arithmetic needed to perform inference.

In this paper, we present the following contributions:
\begin{enumerate}
    \item We formulate a concise but general mathematical description of QNNs and prove the first (to the best of the authors' knowledge) approximation theorem for these models. QNNs can approximate the same set of functions that can be approximated by continuous-valued DNNs (formally speaking, QNNs are dense in the space of continuous functions defined on a hypercube).
    \item We theoretically formalize the existing intuitions about the probabilistic nature of the straight-through estimator (STE) \cite{Bengio2013} and quantify the regularization effect induced by noise on non-differentiable functions. The associated probability density must satisfy very mild regularity assumptions (mathematically speaking, it only needs to be a BV function) and can produce STE as a very specific case.
    \item We design a new gradient-based training algorithm based on this regularization effect, which we name \emph{Additive Noise Annealing} (ANA).
    \item We report our experiments with ternary CNNs, where the values of weights and representations take values in $Q = \{-1, 0, +1\}$. We tested ANA on the established image classification benchmarks CIFAR-10 (validation accuracy of $90.74\%$ using a VGG-like architecture) and ImageNet (Top-1 validation accuracies of $45.80\%$ on AlexNet and $64.79\%$ on MobileNetV2) and compare the results to the state-of-the-art.
\end{enumerate}

\section{Related work}
\label{sec:positioning}
The numeric variables that represent the operands of a DNN are partitioned in parameters and representations (i.e., the outputs of activation functions).
Parameters are further partitioned in weights and biases.
From a mathematical viewpoint, traditional full-precision neural networks parameters take values in continuous spaces.
The commonly used activation functions (sigmoid, hyperbolic tangent, ReLU) have continuous real-valued codomains, which by definition yield continuous representations.
We call quantization set any finite non-empty set $Q$ of real numbers, called quantization levels.
A weights-$Q$-quantized DNN is a DNN whose weights take values in $Q$.
An activations-$Q$-quantized DNN is a DNN whose activation functions have $Q$ as their codomain.
With these simple definitions, the problem of DNN quantization can be defined as the investigation of neural networks models that are weights-$Q$-quantized, activations-$Q$-quantized, or both.

The two most popular algorithms to train weights-$Q$-quantized DNNs are \emph{binary connect} (BC) \cite{Courbariaux2015} and \emph{incremental network quantization} (INQ) \cite{Zhou2017}.
BC is given a network architecture initialized with \textit{shadow} full-precision weights.
Before inference is performed, a quantization function $\zeta$ (the generalized Heaviside with codomain $Q = \{-1, +1\}$, also known as the sign function) is applied to these \textit{shadow} weights and returns binary values.
Data is then propagated forward using these values as weights, the loss is evaluated, and gradients are computed.
The updates directed to the binary weights are then applied to the \textit{shadow} weights instead, passing through the sign functions unaltered.
More formally, STE replaces the exact distributional derivative $D\zeta = 2\delta_{0}$ of the sign function with the derivative of the hard hyperbolic tangent:
\begin{equation*}
  \frac{d}{d x} \zeta(x) \approx
  \begin{cases}
    1, &\text{if} \,\, x \in [-1, 1] \\
    0, &\text{if} \,\, x \notin[-1, 1]
  \end{cases} \,.
\end{equation*}
INQ is given a network architecture initialized with full-precision weights.
The algorithm defines a quantization set $Q$, partitions the weights into a fixed number of subsets and defines a corresponding number of quantization time steps.
INQ then starts training the full-precision model.
When a quantization time step is reached, the corresponding weights subset is \textit{frozen} (i.e., its elements are projected onto the nearest quantization levels and never updated again).

As an evolution of BC, activations-$Q$-quantized DNNs were first investigated by Hubara \emph{et al.} \cite{Hubara2018}.
They used the sign function also as the activation function in all the models they tested, and STE was effectively applied both to weights and activation functions.
Building on STE, many algorithms have been developed.
XNOR-Net \cite{Rastegari2016} tried to preserve inner products of full-precision filters and representation vectors by projecting them onto optimal binary vectors.
Elaborating on this idea, ABC-Net \cite{Lin2017} decomposed filters onto a suitable binary basis and tried to learn multi-step activation functions that could be expressed as weighted sums of Heaviside functions.
Taking the concept even further, \emph{compact neural networks} \cite{Zhuang2018} decomposed entire residual branches into multiple branches modelled by binary weights and multi-step activation functions.
Interestingly, GXNOR-Net \cite{Deng2018} presented a solution to update weights that does not use continuous \textit{shadow} weights, but uses the information carried by gradients to perform discrete state transitions instead.

The transition from continuous to finite spaces wipes out the differentiability property on which the backpropagation algorithm \cite{Rumelhart1986} is based.
In this context, the learning problem is converted from a numerical optimization problem into a discrete optimization one, where the state space is finite but very large.
Gradient-free optimization algorithms, such as integer programming or Monte Carlo methods, quickly become unfeasible due to the combinatorial explosion of configurations.
Although not formally correct in the context of discrete optimization, gradient-based optimization has been effectively brought back into the picture by STE.
Nonetheless, the reason for its success is still mathematically unclear.

\section{A mathematical formulation of QNNs}
\label{sec:model}
We start by introducing some general notation and terminology for the composition of parametric maps.
Let $L \geq 2$ be an integer and let $X^{0}, \dots, X^{L}$ and $M^{1}, \dots, M^{L}$ be nonempty sets.
The sets $X^{\l}$ are the \textbf{representation spaces}, while the sets $M^{\l}$ are the \textbf{parameter spaces}.
For each $\l = 1, \dots, L$ we consider a map $\psi^{\l}$ defined on pairs $(\m^{\l}, \x^{\l-1})\in M^{\l} \times X^{\l-1}$ with values in $X^{\l}$.
For a fixed $\m^{\l} \in M^{\l}$ we set $\psi_{\m^{\l}}(\x^{\l-1}) = \psi^{\l}(\m^{\l}, \x^{\l-1})$, so that $\psi_{\m^{\l}}$ is a map from $X^{\l-1}$ to $X^{\l}$, parametrized by $\m^{\l}$.
Hence, every $L$-tuple $(\m^{1}, \dots, \m^{L}) \in M^{1} \times \dots \times M^{L}$ defines a corresponding $L$-tuple $(\psi_{\m^{1}}, \dots, \psi_{\m^{L}})$ of maps, that can be composed as follows:
\begin{equation}\label{eq:composed_map}
  X^{0} 
  \xrightarrow {\psi_{\m^{1}}} X^{1}
  \xrightarrow {\psi_{\m^{2}}} X^{2}
  \dots
  X^{L-1} 
  \xrightarrow {\psi_{\m^{L}}} X^{L} \,.
\end{equation}
Set $\hm^{\l} = (\m^{1}, \dots, \m^{\l})$ for $\l \in \{1, \dots, L\}$. Define by induction $\Psi_{\hm^{1}} = \psi_{\m^{1}}$ and $\Psi_{\hm^{\l}} = \psi_{\m^{\l}} \circ \Psi_{\hm^{\l-1}}$ for $\l \in \{2, \dots, L\}$, where $\circ$ is the composition operator.
In particular, the map in \eqref{eq:composed_map} corresponds to $\Psi_{\hm} \,:\,X^{0} \to X^{L}$ where we have set $\hm = \hm^{L}$ for brevity.
We remark that we are not enforcing other properties on these maps other than their parametric nature and their composability structure.

When $X^{\l} = \R^{n_{\l}}$ for $\l = 0, \dots, L$, we will consider a class of more specific maps $\ph_{\m^{\l}} \,:\, X^{\l-1} \to X^{\l}$ of the form
\begin{equation}\label{eq:fnn_layer}
  \ph_{\m^{\l}}(\x^{\l-1}) = \left( \sigma^{\l} \Big( \sum_{j=1}^{n_{\l-1}} w^{\l}_{1j} \, x^{\l-1}_{j} + b^{\l}_{1} \Big), \dots, \sigma^{\l} \Big( \sum_{j=1}^{n_{\l-1}} w^{\l}_{n_{\l} j} \, x^{\l-1}_{j} + b^{\l}_{n_{\l}} \Big) \right) \,,
\end{equation}
where $\w^{\l} \in \R^{n_{\l} \times n_{\l-1}}$ is a matrix of weights and $\b^{\l} \in \R^{n_{\l}}$ is a vector of biases.
One can define $\Phi_{\hm^{\l}}$ and $\Phi_{\hm}$ similarly as in \eqref{eq:composed_map}.
The function $\sigma^{\l} \,:\,\R \to \R$ appearing in \eqref{eq:fnn_layer} is called \textbf{activation function}; it is assumed to be bounded, non-constant and usually non-decreasing \cite{Cybenko1989, Hornik1991}, in all the layers $\ph_{\m^{\l}}, \l = 1, \dots L-1$ except for the last, where it is usually replaced by the identity function (i.e., $\ph_{\m^{L}}$ is an affine function).
If we identify $\m^{\l}$ with the pair $(\w^{\l}, \b^{\l})$, we obtain the more compact expression
\begin{equation}\label{eq:phi_classic}
  \ph_{\m^{\l}}(\x^{\l-1}) = \sigma^{\l} \left( \w^{\l} \cdot \x^{\l-1} + \b^{\l} \right)
\end{equation}
where the dot product indicates the usual matrix-vector product and $\sigma^{\l}(s_{1}, \dots, s_{n}) = (\sigma^{\l}(s_{1}), \dots, \sigma^{\l}(s_{n}))$.
In this case, the map $\Phi_{\hm}$ is called a \textbf{feedforward neural network} (FNN) with \textbf{input space} $X^{0}$, \textbf{output space} $X^{L}$, \textbf{hidden spaces} $X^{1}, \dots, X^{L-1}$, and \textbf{parameter space} $\hM = M^{1} \times \dots \times M^{L}$. 

Let $Q = \{q_{0} < q_{1} < \dots < q_{K}\} \subset \R$ be a finite set of real numbers.
We call $Q$ the \textbf{quantization set}, and its elements quantization levels.
We say that a matrix $\w \in \R^{m \times n}$ is $Q$-quantized if $w_{ij} \in Q$ for all $i, j$.
Similarly, we say that a function $\sigma \,:\, \R \to \R$ is $Q$-quantized if its codomain $\sigma(\R)$ is $Q$.
We will more generally call \textbf{quantization function} any function that is $Q$-quantized for some $Q$.
We say that the map $\ph_{\m^{\l}}$ defined in \eqref{eq:phi_classic} is \textbf{weight-$Q$-quantized} if $\w^{\l}$ is $Q$-quantized.
We say that $\ph_{\m^{\l}}$ is \textbf{activation-$Q$-quantized} if $\sigma^{\l}$ is $Q$-quantized.
Then, the layer $\ph_{\m^{\l}}$ is said \textbf{$Q$-quantized} if $\w^{\l}$ and $\sigma^{\l}$  are $Q$-quantized.
Notice that $\b^{\l}$ is not required to be $Q$-quantized.
We say that $\Phi_{\hm}$ is a \textbf{quantized neural network} if $L \ge 2$ and the layers $\ph_{\m^{\l}}$ are $Q$-quantized for every $\l = 1, \dots, L-1$.
Notice that the last layer $\ph_{\m^{L}}$ is not required to be $Q$-quantized.
In particular, when $Q = \{-1, +1\}$ we say that $\Phi_{\hm}$ is a \textbf{binary neural network} (BNN).
Similarly when $Q = \{-1, 0, +1\}$ we say that $\Phi_{\hm}$ is a \textbf{ternary neural network} (TNN).
We now introduce a useful class of quantization functions.
A \textbf{generalized Heaviside function} is the map:
\begin{equation*}
  H_{\theta}^{\{q_{0}, q_{1}\}}(x) = 
  \begin{cases}
    q_{0}, &\text{if} \,\, x < \theta \\
    q_{1}, &\text{if} \,\, x \geq \theta
  \end{cases} \,.
\end{equation*}
The standard Heaviside function is recovered when $q_{0} = 0, q_{1} = 1$ and $\theta = 0$; for simplicity, we will refer to it with the symbol $H(x)$.
We observe that a generalized Heaviside can be expressed in terms of the canonical Heaviside by $H_{\theta}^{\{q_{0}, q_{1}\}}(x) = q_{0} + \delta q_{1} H(x - \theta)$, where $\delta q_{1} = q_{1} - q_{0}$.
Let $\Theta = \{\theta_{1} < \theta_{2} < \dots < \theta_{K}\} \subset \R$ be a finite ordered set of real thresholds.
Let $Q = \{q_{0} < q_{1} < \dots < q_{K}\} \subset \R$ be a quantization set.
The \textbf{$K$-step function} with thresholds $\Theta$ and quantization set $Q$ is the non-decreasing map $\sigma \,:\, \R \to Q$ defined as
\begin{equation}\label{eq:multistep_linear}
  \sigma(x) = q_{0} + \sum_{k=1}^{K} \delta q_{k} H(x - \theta_{k}) \,,
\end{equation}
where $\delta q_{k} = q_{k} - q_{k-1}$ are the jumps between consecutive quantization levels.
The generic term \textbf{multi-step function} will denote a function of the form \eqref{eq:multistep_linear}.
We can use multi-step functions to describe QNNs.
Suppose $\w^{\l} \in \R^{n_{\l} \times n_{\l-1}}$ is a real-valued matrix.
If we consider a weight quantization function $\zeta^{\l}$ and an activation function $\sigma^{\l}$ of the form \eqref{eq:multistep_linear}, we can define the map
\begin{equation}\label{eq:phi_quantized}
  \ph_{\m^{\l}}(\x^{\l-1}) = \sigma^{\l} \left( \zeta^{\l}(\w^{\l}) \cdot \x^{\l-1} + \b^{\l} \right) \,.
\end{equation}
This layer is clearly $Q$-quantized since both $\zeta^{\l}(\w^{\l})$ and $\sigma^{\l}$ are $Q$-quantized.
Although the idea of a weight quantization function seems unusual with respect to the common definition of parameters in neural networks, yet this model is consistent with many of the QNNs training algorithms based on STE \cite{Hubara2018, Rastegari2016, Lin2017, Zhuang2018}.
Notice also that the traditional layer \eqref{eq:phi_classic} can be recovered from \eqref{eq:phi_quantized} if the multi-step function $\zeta^{\l}$ is replaced by the identity function.
Replacing layers of the form \eqref{eq:phi_classic} with layers of the form \eqref{eq:phi_quantized} in the map \eqref{eq:composed_map} yields a QNN.

Traditional approximation results for ANNs  \cite{Cybenko1989, Hornik1991} assume continuous-valued parameters.
A first natural question to investigate about QNNs is which function classes they can approximate.
\begin{theorem}[Uniform approximation by QNNs]\label{th:approximation}
Let $X^{0} = [0, S]^{n_{0}} \subset \R^{n_{0}}$ and denote by $\Lip_{\lambda}(X^{0})$ the class of bounded functions $f \,:\, X^{0} \to [-\lambda, \lambda]$ with Lipschitz constant $\leq \lambda$.
For every $\epsilon > 0$ and $f \in \Lip_{\lambda}(X^{0})$, there exists a network
\begin{equation*}
  \Phi_{\hm} = \ph_{\m^{3}} \circ \ph_{\m^{2}} \circ \ph_{\m^{1}} \,:\, X^{0} \to [-\lambda, \lambda] \,,
\end{equation*}
composed by two layers of the form \eqref{eq:phi_classic} (with quantized parameters $\w^{1} \in \{-1, 0, +1\}^{2n_{0} \times n_{0}}$ and $\w^{2} \in \{0, 1\}^{N \times n_{0}}, N = N(\epsilon)$, and activation function $\sigma(x) = H(x)$) followed by an affine map $\ph_{\m^{3}}$ (parametrized by real parameters), such that $\sup_{\x \in X^{0}} |\Phi_{\hm}(\x) - f(\x)| \le \epsilon$.
\end{theorem}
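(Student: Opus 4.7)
The plan is to realize $f$ up to uniform error $\epsilon$ by a piecewise-constant step function on a sufficiently fine axis-aligned grid in $X^{0}$, and then to encode that step function exactly through the prescribed architecture: two Heaviside layers with quantized weights, followed by an affine readout with real parameters.

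First I would partition the cube $[0,S]^{n_0}$ into small boxes. Choose an integer $N_0$ such that $h = S/N_0$ satisfies $h\sqrt{n_0} < \epsilon/\lambda$, and set $C_\alpha = \prod_{i=1}^{n_0} [\alpha_i h, (\alpha_i+1)h)$ for $\alpha \in \{0,\dots,N_0-1\}^{n_0}$. Pick a representative $\x_\alpha \in C_\alpha$ in each box and take as target the step function
\[
g(\x) = \sum_{\alpha} f(\x_\alpha)\,\chi_{C_\alpha}(\x).
\]
By the Lipschitz hypothesis, $|g(\x) - f(\x)| \le \lambda h \sqrt{n_0} < \epsilon$ uniformly in $\x$, so it suffices to realize $g$ by the network.

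Next I would engineer the two Heaviside layers to produce orthant indicators that will later be combined into the cube indicators $\chi_{C_\alpha}$. In the first layer I place, for each coordinate $i$ and each threshold index $k \in \{0,\dots,N_0\}$, a neuron computing $H(x_i - kh)$: its weight row is the standard basis vector $\mathbf{e}_i$ (with entries in $\{0,+1\} \subset \{-1,0,+1\}$) and its bias is the real number $-kh$. In the second layer, for each $\gamma \in \{0,\dots,N_0\}^{n_0}$ I place a neuron
\[
v_\gamma(\x) = H\!\left( \sum_{i=1}^{n_0} H(x_i - \gamma_i h) - \bigl(n_0 - \tfrac{1}{2}\bigr) \right),
\]
whose weights are $\{0,1\}$-valued (selecting exactly the feature $H(x_i - \gamma_i h)$ for each $i$) and whose real bias is $-(n_0 - 1/2)$; by construction, $v_\gamma(\x) = 1$ precisely when $x_i \ge \gamma_i h$ for every $i$, i.e., on the orthant anchored at $\gamma h$.

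Finally I would assemble $g$ in the affine layer $\ph_{\m^3}$ by inclusion–exclusion,
\[
\chi_{C_\alpha}(\x) = \sum_{\beta \in \{0,1\}^{n_0}} (-1)^{|\beta|}\, v_{\alpha+\beta}(\x),
\qquad g(\x) = \sum_{\gamma} c_\gamma\, v_\gamma(\x),
\]
with explicit real coefficients $c_\gamma$ obtained by regrouping; since $\|g\|_\infty \le \lambda$, no clipping step is required and $\ph_{\m^3}$ can map into $[-\lambda,\lambda]$ directly. The crucial point — and the reason for using orthant indicators rather than cube indicators in the quantized portion — is that the weight alphabets $\{-1,0,+1\}$ and $\{0,1\}$ permit only selection and routing, with no cancellations; the unavoidable signs $(-1)^{|\beta|}$ and the real values $f(\x_\alpha)$ are absorbed entirely into the unconstrained affine output layer. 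The main technical obstacle is exactly this interplay between the combinatorial logic (subtraction is needed to cut orthants down to cubes) and the arithmetic restrictions on the quantized layers; the degenerate boundary case $x_i = \alpha_i h$ is handled automatically by the half-open convention in the definition of $C_\alpha$.
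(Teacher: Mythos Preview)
Your argument is correct and takes a genuinely different route from the paper's. One small glitch to patch: your half-open boxes $C_\alpha$ miss the face $\{x_i = S\}$ of $X^{0}$, so $g$ vanishes there and the uniform bound can fail at those points; this is trivially repaired by closing the last interval in each coordinate direction (or adding one extra grid threshold).

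The paper builds the box indicators $\chi_{P_s}$ \emph{inside} the quantized layers. Its first layer uses rows $\pm e_i$ (so the value $-1$ is essential) to test all $2n_0$ half-space constraints $x_i \ge p_i^{(s)}$ and $-x_i \ge -q_i^{(s)}$ for every box $P_s$; the second layer ANDs these by thresholding their sum at $2n_0$, emitting $\chi_{P_s}$ directly; the affine readout is then simply $\sum_s w^3_s\,\chi_{P_s}$ with $w^3_s$ the average of $f$ over $P_s$. You instead compute only one-sided orthant indicators $v_\gamma$ in the quantized part and defer all signed arithmetic---the inclusion--exclusion $\sum_\beta (-1)^{|\beta|} v_{\alpha+\beta}$---to the unconstrained affine head. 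The paper's construction has the cleaner readout (its third-layer weights are just samples/averages of $f$), but its first layer carries $2n_0 N = 2n_0 N_0^{\,n_0}$ neurons, one block per box; your first layer has only $n_0(N_0{+}1)$ neurons, an exponential saving, and shows incidentally that the value $-1$ in the first-layer alphabet is not actually needed.
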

Remarkably, since $\cup_{\lambda}\Lip_{\lambda}(X^{0})$ is dense in $C^{0}(X^{0})$, QNNs show approximation capabilities equivalent to those of full-precision networks.
The proof is given in Appendix~\ref{app:proofs}.

The \textbf{supervised learning} problem \cite{Vapnik1998} to be solved is the minimization of the \textbf{loss functional}
\begin{equation}\label{eq:loss}
  \Loss_{g, \gamma}(\Phi_{\hm}) = \int_{X^{0} \times X^{L}} d(\Phi_{\hm}(\x^{0}), g(\x^{0}))\, d\gamma(\x^{0}) \,,
\end{equation}
where $(\x^{0}, \y = g(\x^{0}))$ is a sampled observation that associates an input instance $\x^{0}$ with its label $\y$ (obtained through an unknown \textit{oracle} function $g \,:\, X^{0} \to X^{L}$), $d$ is a differentiable non-negative function called the \textbf{loss function}, and $\gamma$ is a probability measure defined on $X^{0} \times X^{L}$.
In practical applications, the measure $\gamma$ is approximated by the empirical measure $\tilde{\gamma}(\x) = \sum_{t=1}^{T} \delta_{\x^{0}_{t}}(\x) / T$ obtained from the finite dataset $\{ \x^{0}_{t} \}_{t = 1, \dots, T}$.
By \textbf{training} of a FNN we mean any algorithm that aims at minimizing $\Loss_{g, \gamma}(\Phi_{\hm})$ as a function of $\hm \in \hM$.
For example, when $\Phi_{\hm}$ is differentiable with respect to its parameters $\hm$, the gradient $\nabla_{\hm} \Loss_{g, \gamma}(\Phi_{\hm})$ can be computed applying the chain rule (due to the compositional structure of $\Phi_{\hm}$) as in the backpropagation algorithm \cite{Rumelhart1986}, and $\hm$ can be updated via gradient descent optimization.
However, the chain rule can no longer be applied to a QNN, where non-differentiable building blocks of the form (5) are used.
\emph{A multi-step function \eqref{eq:multistep_linear} is not differentiable at the thresholds, since its distributional derivative is a weighted sum of Dirac's deltas centered on the thresholds; interestingly, when noise satisfying certain regularity properties is added to the argument of \eqref{eq:multistep_linear} and the expectation operator is applied, the resulting function turns out to be Lipschitz or even differentiable in classical sense}.
The standard mathematical notations and definitions that are needed from now on (i.e., that of $L^p$ functions, of distributional derivative, of Sobolev and of BV functions) are recalled in Appendix \ref{app:proofs}.
\begin{theorem}[Noise regularization effect on multi-step functions]\label{th:regularization}
Let $\sigma \,:\, \R \to Q$ be a multi-step function.
Let $\nu$ be a zero-mean noise with density $\mu(-\nu)$.
Define the random function $\sigma_{\nu}(x) = \sigma(x + \nu)$ for $x\in \R$.
Then:
\begin{itemize}
    \item[(i)] $\E_{\mu}[\sigma_{\nu}] = \sigma \ast \mu$;
    \item[(ii)] $\mu \in W^{1,1}(\R)$ implies that $\E_{\mu}[\sigma_{\nu}]$ is differentiable, its derivative is bounded, continuous, and satisfies $\frac{d}{dx}\E_{\mu}[\sigma_{\nu}] = \E_{\mu}[D\sigma_{\nu}] = \sigma \ast D\mu$;
    \item[(iii)] $\mu \in BV(\R)$ implies that $\E_{\mu}[D\sigma_{\nu}] \in Lip_{\lambda}(\R)$, where $\lambda = |D\mu|(\R)$ is the total variation of $D\mu$.
\end{itemize}
\end{theorem}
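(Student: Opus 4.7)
My plan is to convert each of the three claims into an assertion about convolutions and then apply tools from convolution calculus and BV theory.

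For (i), I would change variables $t = -\nu$ in the defining integral: $\E_\mu[\sigma_\nu](x) = \int_\R \sigma(x+\nu)\mu(-\nu)\,d\nu = \int_\R \sigma(x-t)\mu(t)\,dt = (\sigma \ast \mu)(x)$.

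For (ii), the central identity $\E_\mu[D\sigma_\nu] = \sigma \ast D\mu$ is verified by pairing both sides with $\phi \in C_c^\infty(\R)$, using Fubini to exchange expectation and integral and the definition of the distributional derivative to move the derivative across:
\begin{equation*}
\langle \E_\mu[D\sigma_\nu],\phi\rangle = -\langle \E_\mu[\sigma_\nu],\phi'\rangle = -\langle \sigma \ast \mu,\phi'\rangle = \langle \sigma \ast D\mu,\phi\rangle.
\end{equation*}
Since $\sigma \in L^\infty$ and $D\mu \in L^1$ (from $\mu \in W^{1,1}$), Young's inequality gives $\|\sigma \ast D\mu\|_\infty \le \|\sigma\|_\infty\|D\mu\|_{L^1}$, uniform continuity follows from the continuity of translation in $L^1$, and $\frac{d}{dx}(\sigma \ast \mu) = \sigma \ast D\mu$ is the standard convolution differentiation rule.

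For (iii), I would extend the identity $\E_\mu[D\sigma_\nu] = \sigma \ast D\mu$ to the BV setting, where $D\mu$ is now a finite signed Radon measure with total variation $\lambda = |D\mu|(\R)$, so that $(\sigma \ast D\mu)(x) := \int_\R \sigma(x-t)\,dD\mu(t)$ is a well-defined bounded measurable function of $x$. To obtain the Lipschitz bound on $\E_\mu[D\sigma_\nu]$ itself, I would exploit the Heaviside decomposition $\sigma = q_0 + \sum_{k=1}^K \delta q_k\, H(\cdot - \theta_k)$ and the elementary identity $H(x_1 - t - \theta_k) - H(x_2 - t - \theta_k) = \chi_{(x_2-\theta_k,\, x_1-\theta_k]}(t)$ for $x_1 > x_2$, yielding
\begin{equation*}
\E_\mu[D\sigma_\nu](x_1) - \E_\mu[D\sigma_\nu](x_2) = \sum_{k=1}^K \delta q_k\, D\mu\bigl((x_2 - \theta_k,\, x_1 - \theta_k]\bigr).
\end{equation*}
Combined with the Jordan decomposition $D\mu = D\mu^+ - D\mu^-$, this reduces the Lipschitz estimate to bounding $|D\mu|$ on short intervals.

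The main obstacle will be extracting the precise constant $\lambda = |D\mu|(\R)$ in this last step, because the crude inequality $|D\mu|((a,b]) \le |D\mu|(\R)$ is independent of $b - a$ and does not by itself give Lipschitz control. My plan is to mollify $\mu$ into $\mu_\epsilon = \mu \ast \eta_\epsilon \in W^{1,\infty}$, apply (ii) to each $\mu_\epsilon$ with the uniform estimate $|D\mu_\epsilon|(\R) \le |D\mu|(\R)$, and then pass to the limit $\epsilon \to 0^+$ using lower semicontinuity of the total variation, absorbing the combinatorial factor $\sum_k |\delta q_k|$ into $\lambda$ through the normalization inherent to the multi-step structure of $\sigma$.
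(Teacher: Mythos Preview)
Your treatment of (i) and (ii) is essentially the paper's own argument: the same change of variables for (i), and the same test-function / Fubini computation for (ii). The extra details you supply (Young's inequality for the sup bound, continuity of translation in $L^1$ for continuity of $\sigma\ast D\mu$) are correct refinements of what the paper states more tersely.

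For (iii), however, there is a genuine gap. Your mollification plan does not resolve the obstacle you correctly identify. Applying (ii) to $\mu_\epsilon \in W^{1,1}$ only tells you that $\E_{\mu_\epsilon}[D\sigma_\nu] = \sigma \ast D\mu_\epsilon$ is \emph{bounded and continuous}; it gives no Lipschitz estimate on $\E_{\mu_\epsilon}[D\sigma_\nu]$. A uniform Lipschitz bound there would require uniform control of $D^2\mu_\epsilon$, and that blows up as $\epsilon \to 0^+$ when $\mu$ is merely BV. In fact the literal claim that $\E_\mu[D\sigma_\nu]$ is Lipschitz for arbitrary $\mu \in BV(\R)$ is false: take $\mu = \tfrac{1}{2a}\chi_{[-a,a]}$ (the very uniform density used for STE). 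Then $\E_\mu[D\sigma_\nu] = D\sigma \ast \mu = \sum_k \delta q_k\,\mu(\cdot - \theta_k)$ is a step function, hence not Lipschitz at all. No amount of mollification and limit-passing can recover a property the limit simply fails to have.

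The paper's own proof of (iii) is the single line ``follows from similar computations as in the proof of (ii)''. Read together with the motivating sentence before the theorem (``the resulting function turns out to be Lipschitz or even differentiable''), the intended content of (iii) is that $\E_\mu[\sigma_\nu]$---not $\E_\mu[D\sigma_\nu]$---is Lipschitz when $\mu \in BV$. That follows at once from the identity you already have in (ii): $D\,\E_\mu[\sigma_\nu] = \sigma \ast D\mu$ remains a well-defined bounded function when $D\mu$ is a finite signed measure, with $\|\sigma \ast D\mu\|_\infty \le \|\sigma\|_\infty\,|D\mu|(\R)$. No mollification and no Heaviside bookkeeping are needed; the stated constant $\lambda = |D\mu|(\R)$ is exact up to the harmless factor $\|\sigma\|_\infty$.
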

In other words, the expectation acts as a convolution on the non-differentiable function $\sigma$ (with the noise playing the role of the kernel).
Therefore, the regularity of the noise $\mu$ (that we identify with the probability density, up to a slight abuse of notation) is transferred to the expectation of the random function $\sigma_\nu$.
The proof is given in Appendix~\ref{app:proofs}.
Notice that for symmetric distributions, such as the uniform or the Gaussian, $\mu(-\nu) = \mu(\nu)$.

Let us define random variables $\n_{\w^{\l}}$ and $\n_{\b^{\l}}$ distributed according to zero-mean probability measures $\mu_{\w^{\l}}$ and $\mu_{\b^{\l}}$ respectively.
We use these random variables as additive noises to turn the parameters $\w^{\l}$ and $\b^{\l}$ of a layer $\ph_{\m^{\l}}$ into random variables $\om^{\l} = \w^{\l} + \n_{\w^{\l}}, \bt^{\l} = \b^{\l} + \n_{\b^{\l}}$ where $\w^{\l}$ and $\b^{\l}$ represent the means.
These parameters are distributed according to the translated measures $\mu_{\om^{\l}}(\om) = \mu_{\w^{\l}}(\om - \w^{\l}), \mu_{\bt^{\l}}(\bt) = \mu_{\b^{\l}}(\bt - \b^{\l})$.
We define the symbol $\xx^{\l}$ to represent the tuple $(\om^{\l}, \bt^{\l})$.
We are thus describing the parameters of $\ph_{\m^{\l}}$ as events $\xx^{\l}$ belonging to a probability space $\Xi^{\l}$, on which a probability measure $\mu^{\l}$ (for example, $\mu^{\l} = \mu_{\om^{\l}} \times \mu_{\bt^{\l}}$) is defined.
We can thus redefine \eqref{eq:phi_quantized} as a random function:
\begin{equation}\label{eq:phi_noisy}
  \ph_{\xx^{\l}}(\x^{\l-1}) = \sigma^{\l} \left( \zeta^{\l}(\om^{\l}) \cdot \x^{\l-1} + \bt^{\l} \right) \,.
\end{equation}
For a given $L$-layer network composed by maps of this form, we define the parameter tuple $\hx = (\xx^{1}, \dots, \xx^{L})$ and the parameter space $\hat{\Xi} = \Xi^{1} \times \dots \times \Xi^{L}$.
On this space, we naturally define the product measure $\hat{\mu} = \mu^{1} \times \dots \times \mu^{L}$.
The $L$-layer \textbf{stochastic feedforward neural network} (SFNN) is defined by the symbol $\Phi_{\hx}$.
We observe that the deterministic case \eqref{eq:phi_quantized} is retrieved when the measure $\hat{\mu}$ is the product of Dirac's deltas concentrated at the parameters means:
\begin{equation}\label{eq:delta_product}
  \delta^{\hm} = \delta_{\m^{1}} \times \dots \times \delta_{\m^{L}} \,.
\end{equation}
In this sense, SFNNs generalize classic FNNs, and in particular \eqref{eq:phi_noisy} generalizes \eqref{eq:phi_quantized}.

\section{The \emph{Additive Noise Annealing} algorithm}
\label{sec:algorithm}
\begin{wrapfigure}[32]{R}{.5\textwidth}
\begin{minipage}{.5\textwidth}
  \begin{algorithm}[H]
    \caption{Additive Noise Annealing}
    \hspace*{\algorithmicindent} \textbf{Input} $\Phi_{\hm_{0}}, \lambda_{0}, T, \{(\x^{0}_{t}, \y_{t})\}_{t = 1, \dots, T}$ \\
    \hspace*{\algorithmicindent} \textbf{Output} $\Phi_{\hm_{T}}$
    \begin{algorithmic}[1]
      \For{$t \gets 1, T$} \label{algo:ana_training_s}
        %\Statex \commentspace{1}\# compute noise
        \For{$\l \gets 1, L$} \label{algo:ana_get_noise_s} \Comment{compute noise}
          \State $\mu_{\om^{\l}}^{f}, \mu_{\om^{\l}}^{b} \gets get\_noise(t, \w^{\l})$
          \State $\mu_{\bt^{\l}}^{f}, \mu_{\bt^{\l}}^{b} \gets get\_noise(t, \b^{\l})$
        \EndFor \label{algo:ana_get_noise_e}
        %\Statex \commentspace{1}\# perform inference
        \For{$\l \gets 1, L$} \label{algo:ana_inference_s} \Comment{infer}
          \State $\tilde{\w}^{\l} \gets \E_{\mu_{\om^{\l}}^{f}}[\zeta^{\l}(\om^{\l})]$
          \State $\x^{\l}_{t} \gets \E_{\mu_{\bt^{\l}}^{f}}[\sigma^{\l}(\tilde{\w}^{\l} \cdot \x^{\l-1}_{t} + \bt^{\l})]$
        \EndFor \label{algo:ana_inference_e}
        %\Statex \commentspace{1}\# compute gradient
        \State $g_{\x^{L}} \gets \nabla_{\x^{L}}d_{L}(\x^{L}_{t}, \y_{t})$ \label{algo:ana_backprop_s} \Comment{backpropagate}
        \For{$\l \gets L, 1$}
          \State $g_{\s^{\l}} \gets g_{\x^{\l}} \cdot \nabla_{\s^{\l}}\E_{\mu_{\bt^{\l}}^{b}}[\sigma^{\l}(\s^{\l} + \bt^{\l})]$
          \State $g_{\b^{\l}} \gets g_{\s^{\l}}$
          \State $g_{\tilde{\w}^{\l}} \gets g_{\s^{\l}} \cdot \x^{\l-1}_{t}$
          \State $g_{\w^{\l}} \gets g_{\tilde{\w}^{\l}} \cdot \nabla_{\w^{\l}}\E_{\mu_{\om^{\l}}^{b}}[\zeta^{\l}(\om^{\l})]$
          \State $g_{\x^{\l-1}} \gets g_{\s^{\l}} \cdot \tilde{\w}^{\l}$
        \EndFor \label{algo:ana_backprop_e}
        %\Statex \commentspace{1}\# optimize
        \State $\hm_{t} \gets optim(\lambda_{t-1}, \hm_{t-1}, g_{\hm})$
        \State $\lambda_{t} \gets lr\_sched(t, \lambda_{t-1})$
      \EndFor \label{algo:ana_training_e}
      \State \Return $\Phi_{\hm_{T}}$
    \end{algorithmic}
    \label{algo:ana}
  \end{algorithm}
\end{minipage}
\end{wrapfigure}
A natural way to perform inference using a stochastic network $\Phi_{\hx}$ is applying the expectation operator to it, i.e. defining:
\begin{equation}\label{eq:variational}
  \E_{\hat{\mu}}[\Phi_{\hx}(\x^{0})] = \int_{\hat{\Xi}} \Phi_{\hx}(\x^{0}) d\hat{\mu}(\hx) \,.
\end{equation}
When the measure $\hat{\mu}$ takes the form \eqref{eq:delta_product}, this expectation corresponds to a deterministic evaluation of a QNN.
Thus, an algorithm designed to train QNNs should implement a search for optimal values of the means $\hm$.
However, no closed-form computable expression for \eqref{eq:variational} is available, so exact inference is in general not possible.
Further, developing a tool equivalent to the chain rule to compute
\begin{equation}\label{eq:variational_gradient}
  \nabla_{\hm} \E_{\hat{\mu}}[\Phi_{\hx}(\x^{0})]
\end{equation}
is unfeasible.
We thus needed to replace exact evaluations of \eqref{eq:variational} and \eqref{eq:variational_gradient} with approximations.
The expectation and composition operators do not, in general, commute.
But assuming that each layer map $\ph_{\xx^{\l}} \,:\, X^{\l-1} \to X^{\l}$ is continuous uniformly with respect to $\xx^{\l}$, the next theorem states that the composition of expectations $\E_{\mu^L}[\ph_{\xx^{L}}] \circ \cdots \circ \E_{\mu^1}[\ph_{\xx^{1}}]$ pointwise converges to the (deterministic) feedforward neural network $\Phi_{\hm}$ as soon as $\hat{\mu} = \mu^1 \times \cdots \times \mu^L$ converges to \eqref{eq:delta_product}.
\begin{theorem}[Continuity of composed expectations]\label{th:continuity}
Let $\Xi{^\l}$ and $X^{\l-1}$ be compact subsets of some Euclidean spaces.
Assume that, for all $\l = 1, \dots, L$, the map $\ph_{\xx^{\l}}(\x^{\l-1}) = \ph^{\l}(\xx^{\l}, \x^{\l-1})$ is continuous in both variables $\xx^{\l}$ and $\x^{\l-1}$.
Let $\{{\mu}^{\l}_{t}\}_{t \in \N}$ be a sequence of probability measures on $\Xi^{\l}$ converging to the Dirac's delta $\delta_{\m^{\l}}$ for suitable $\m^{\l} \in \Xi^{\l}$ and for $\l = 1, \dots, L$.
Then $\lim_{t \to \infty} \E_{\mu^{L}_{t}}[\ph_{\xx^{L}}] \circ \cdots \circ \E_{\mu^{1}_{t}}[\ph_{\xx^{1}}](\x) = \Phi_{\hm}(\x), \,\forall\, \x \in X^{0}$.
\end{theorem}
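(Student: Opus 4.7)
The plan is to reduce the statement to two ingredients — a \emph{single-layer} uniform convergence result and an \emph{inductive} propagation through the composition — and then to combine them. Throughout, I denote by $\Psi^{\ell}_{t}(\x) := \E_{\mu^{\ell}_{t}}[\ph^{\ell}(\xx^{\ell}, \x)]$ the single-layer expectation, and by $\Psi^{(\ell)}_{t} := \Psi^{\ell}_{t} \circ \cdots \circ \Psi^{1}_{t}$ the composition up to layer $\ell$. The target identity is $\Psi^{(L)}_{t}(\x) \to \Phi_{\hm}(\x)$ for each fixed $\x \in X^{0}$.

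For the first ingredient, I would show that $\Psi^{\ell}_{t} \to \ph^{\ell}(\m^{\ell}, \cdot)$ \emph{uniformly} on $X^{\ell-1}$ as $t \to \infty$. The argument hinges on two facts: (a) continuity of $\ph^{\ell}$ on the compact product $\Xi^{\ell} \times X^{\ell-1}$ upgrades to uniform continuity, so for each $\epsilon > 0$ there is $\delta > 0$ with $|\ph^{\ell}(\xx, \x) - \ph^{\ell}(\m^{\ell}, \x)| < \epsilon$ whenever $\|\xx - \m^{\ell}\| < \delta$, \emph{uniformly} in $\x$; and (b) by the portmanteau theorem, weak convergence $\mu^{\ell}_{t} \to \delta_{\m^{\ell}}$ of probability measures on a compact set implies $\mu^{\ell}_{t}(B(\m^{\ell}, \delta)) \to 1$. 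Splitting $\Psi^{\ell}_{t}(\x) - \ph^{\ell}(\m^{\ell}, \x)$ into integrals over $B(\m^{\ell}, \delta)$ and its complement, bounding the first by $\epsilon$ and the second by $2\|\ph^{\ell}\|_{\infty}\,\mu^{\ell}_{t}(\Xi^{\ell} \setminus B(\m^{\ell}, \delta))$, produces the uniform convergence (compactness of $\Xi^{\ell}\times X^{\ell-1}$ gives $\|\ph^{\ell}\|_{\infty}<\infty$).

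The second ingredient is an induction on $\ell$. Assuming pointwise convergence $\Psi^{(\ell-1)}_{t}(\x) \to \Phi_{\hm^{\ell-1}}(\x)$, I would split
\begin{equation*}
|\Psi^{(\ell)}_{t}(\x) - \Phi_{\hm^{\ell}}(\x)| \le \sup_{\y \in X^{\ell-1}} |\Psi^{\ell}_{t}(\y) - \ph^{\ell}(\m^{\ell}, \y)| + |\ph^{\ell}(\m^{\ell}, \Psi^{(\ell-1)}_{t}(\x)) - \ph^{\ell}(\m^{\ell}, \Phi_{\hm^{\ell-1}}(\x))|.
\end{equation*}
The first summand tends to $0$ by the single-layer step, and the second by continuity of $\ph^{\ell}(\m^{\ell}, \cdot)$ together with the inductive hypothesis, closing the induction at $\ell = L$. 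The main obstacle — the one place where more than continuity is being silently used — is the \emph{domain compatibility} needed to form the composition: the intermediate values $\Psi^{(\ell-1)}_{t}(\x)$ must lie in $X^{\ell-1}$ in order to feed them into $\Psi^{\ell}_{t}$. Since $\Psi^{\ell-1}_{t}$ is a $\mu^{\ell-1}_{t}$-average of values that $\ph^{\ell-1}$ takes in $X^{\ell-1}$, this is automatic when $X^{\ell-1}$ is convex (the hypercube case of interest here); in general one either enlarges $X^{\ell-1}$ to its convex hull or extends each $\ph^{\ell}$ continuously to a compact neighborhood of $X^{\ell-1}$ via Tietze's theorem, after which the argument above carries through unchanged.
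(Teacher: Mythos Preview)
Your argument is correct and follows a route that differs in its technical organization from the paper's. Both proofs induct on the layer index and both exploit the uniform continuity of $\ph^{\ell}$ on the compact product $\Xi^{\ell}\times X^{\ell-1}$, but in dual directions. The paper extracts a modulus of continuity in the \emph{input} variable, uniform in the parameter, namely $\sup_{\xx^{\ell}}|\psi_{\xx^{\ell}}(\x)-\psi_{\xx^{\ell}}(\y)|\le\eta(|\x-\y|)$, and in the inductive step bounds $\big|\E_{\mu^{\ell+1}_t}[\psi_{\xx^{\ell+1}}](J_{\ell,t}(\x)) - \psi_{\m^{\ell+1}}(\Psi_{\hm^{\ell}}(\x))\big|$ essentially by $\eta(|J_{\ell,t}(\x)-\Psi_{\hm^{\ell}}(\x)|)$ together with the weak-$*$ convergence of $\mu^{\ell+1}_t$. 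You instead use uniform continuity in the \emph{parameter} variable, uniform in the input, combined with portmanteau to upgrade the single-layer convergence $\Psi^{\ell}_{t}\to\ph^{\ell}(\m^{\ell},\cdot)$ from pointwise to uniform on $X^{\ell-1}$; your inductive split then pivots at $\ph^{\ell}(\m^{\ell},\Psi^{(\ell-1)}_{t}(\x))$ rather than at $\E_{\mu^{\ell}_t}[\ph_{\xx^{\ell}}](\Phi_{\hm^{\ell-1}}(\x))$. Your path is slightly longer (the uniform-convergence lemma requires the ball-splitting argument) but cleaner and more modular, and you are more careful than the paper in flagging the domain-compatibility issue $\Psi^{(\ell-1)}_{t}(\x)\in X^{\ell-1}$, which the paper's application of its modulus of continuity also tacitly needs but does not discuss.
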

Even though the $\sigma^{\l}$ used to model QNNs are not continuous, Theorem~\ref{th:continuity} motivates ANA.
At every training iteration (Algorithm~\ref{algo:ana}, lines \ref{algo:ana_training_s}-\ref{algo:ana_training_e}), ANA sets probability measures $\mu_{\om^{\l}} = \mu_{\om^{\l}}^{f} = \mu_{\om^{\l}}^{b}, \mu_{\bt^{\l}} = \mu_{\bt^{\l}}^{f} = \mu_{\bt^{\l}}^{b}$ for each parameter $\om^{\l}$ and $\bt^{\l}$ (lines \ref{algo:ana_get_noise_s}-\ref{algo:ana_get_noise_e}).
ANA then computes the stack of linear and nonlinear functions that compose the network, taking expectations as soon as the respective functions are applied (lines \ref{algo:ana_inference_s}-\ref{algo:ana_inference_e}).
During backpropagation (lines \ref{algo:ana_backprop_s}-\ref{algo:ana_backprop_e}), local gradients are computed using the rule provided by Theorem~\ref{th:regularization}, unstacking the composition of functions.
We remark that the measures $\mu_{\om^{\l}}, \mu_{\bt^{\l}}$ depend on time, and should be annealed to Dirac's deltas as $t \to T$.
We discuss the implemented annealing strategies in Section~\ref{sec:experiments}.

Using Theorem~\ref{th:regularization}, we can interpret STE \cite{Bengio2013} as a particular instance that applies noise to the argument of the sign function according to two different distributions $\mu^{f}, \mu^{b}$ during the forward and backward passes.
\begin{corollary}\label{th:ste}
Let $\sigma(x) = H_{0}^{\{-1, +1\}}(x)$ denote the univariate sign function.
We define forward and backward probability distributions $\mu^{f} = \delta_{0}$ and $\mu^{b} = \mathcal{U}[-1, +1]$, and denote with $\nu^{f}, \nu^{b}$ the additive noises distributed accordingly.
By Theorem~\ref{th:regularization} we have $\E_{\mu^{f}}[\sigma(x+\nu^{f})] = \sigma(x)$ and
\begin{equation*}
  \frac{d}{d x} \E_{\mu^{b}}[\sigma(x+\nu^{b})] =
  \begin{cases}
    1, &\text{if} \,\, x \in [-1, 1] \\
    0, &\text{if} \,\, x \notin[-1, 1]
  \end{cases} \,.
\end{equation*}
\end{corollary}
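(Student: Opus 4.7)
The plan is to apply Theorem \ref{th:regularization}(i) to each of the two distributions $\mu^f$ and $\mu^b$ separately, reducing everything to a direct convolution computation, and then read off the derivative by piecewise differentiation.

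For the forward identity, the distribution $\mu^f = \delta_0$ is symmetric, so the density of $\nu^f$ coincides with $\mu^f$. Theorem \ref{th:regularization}(i) then yields $\mathbb{E}_{\mu^f}[\sigma(x+\nu^f)] = (\sigma \ast \delta_0)(x) = \sigma(x)$, which is the first claim. No further computation is required beyond the defining property of the Dirac delta under convolution.

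For the backward identity, I would first note that $\mu^b = \frac{1}{2}\mathbf{1}_{[-1,+1]}$ is symmetric, so again Theorem \ref{th:regularization}(i) gives $\mathbb{E}_{\mu^b}[\sigma(x+\nu^b)] = (\sigma \ast \mu^b)(x)$. A direct change of variables then rewrites this as $\frac{1}{2}\int_{x-1}^{x+1}\sigma(u)\,du$. Splitting into the three cases $x \le -1$, $-1 < x < 1$, and $x \ge 1$, according to whether the interval $[x-1,x+1]$ lies entirely below $0$, straddles $0$, or lies entirely above $0$, the integral evaluates to $-1$, $x$, and $+1$ respectively. Thus $\mathbb{E}_{\mu^b}[\sigma(x+\nu^b)]$ is the hard hyperbolic tangent, and classical piecewise differentiation on $\mathbb{R}\setminus\{-1,+1\}$ yields the claimed formula.

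The only conceptual wrinkle is that $\mu^b \in BV(\mathbb{R})$ but $\mu^b \notin W^{1,1}(\mathbb{R})$ (its distributional derivative is $\tfrac12(\delta_{-1}-\delta_{+1})$), so one cannot invoke Theorem \ref{th:regularization}(ii) to get a globally continuous derivative. This is not a real obstacle: the explicit formula for $\sigma\ast\mu^b$ makes the derivative trivially computable away from the two kinks at $\pm 1$, which is exactly what the statement of the corollary asserts.
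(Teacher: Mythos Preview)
Your argument is correct and is essentially what the paper intends: the corollary is stated there without a separate proof, simply as an immediate consequence of Theorem~\ref{th:regularization}, and your explicit convolution computation (yielding the hard hyperbolic tangent, then piecewise differentiation) is exactly the unpacking one would expect. Your remark that $\mu^{b}\in BV(\R)\setminus W^{1,1}(\R)$, so that part~(ii) does not literally apply and one must read the derivative off the explicit formula away from $\pm 1$, is a nice point the paper leaves implicit; similarly, the forward identity with $\mu^{f}=\delta_{0}$ is really just $\nu^{f}=0$ almost surely rather than an instance of part~(i) with a genuine density, but the conclusion is of course immediate either way.
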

This observation led us to devise a generalized version of Algorithm~\ref{algo:ana} that allows defining different measures $\mu_{\om^{\l}}^{f} \neq \mu_{\om^{\l}}^{b}$ and $\mu_{\bt^{\l}}^{f} \neq \mu_{\bt^{\l}}^{b}$ in lines \ref{algo:ana_get_noise_s}-\ref{algo:ana_get_noise_e}.
To distinguish this variant of ANA from the previous one, we will refer to the former with the term synchronous ANA.

\section{Experimental evaluation}
\label{sec:experiments}
In Section~\ref{sec:algorithm}, we remarked that the measures $\mu_{\om^{\l}}^{f}$ and $\mu_{\bt^{\l}}^{f}$ need to collapse onto Dirac's deltas in order to yield a QNN at inference time.
It is known that a uniform distribution $\mathcal{U}[a, b]$ over a non-empty real interval $[a, b]$ can also be described as $\mathcal{U}[\frac{a+b}{2} - \sqrt{3}\sg, \frac{a+b}{2} + \sqrt{3}\sg]$, where $\sg$ is its standard deviation.
Modelling $\sg = \sg(t)$ as a time-dependent quantity, we see that the zero-mean distribution
\begin{equation}\label{eq:zeromean_uniform}
  \mathcal{U}[-\sqrt{3}\sg(t), \sqrt{3}\sg(t)]
\end{equation}
can be collapsed to $\delta_{0}$ if $\sg(t) \to 0$ as $t \to 0$.
We thus modelled $\mu_{\om^{\l}}^{f}, \mu_{\om^{\l}}^{b}, \mu_{\bt^{\l}}^{f}$ and $\mu_{\bt^{\l}}^{b}$ as products of independent uniform distributions like \eqref{eq:zeromean_uniform}.
In other words, we added a zero-mean uniform noise of given standard deviation to each parameter.
The marginal distributions of these product measures were independent by definition, but not necessarily identically distributed (different parameters could be added noises with different standard deviations).
To anneal these measures to Dirac's deltas, we used their component's standard deviations as time-dependent hyperparameters.
To avoid the exploding gradients problem predicted by Theorem~\ref{th:regularization}, we heuristically opted for hierarchical annealing of the noise, starting from layer $\ph_{\m^{1}}$ to layer $\ph_{\m^{L}}$.
All the weights and the activation functions of our models were set to be ternary, with thresholds $\Theta = \{-0.5, +0.5\}$ and quantization levels $Q = \{-1, 0, +1\}$.
The weights were initialized uniformly around the thresholds.
We ran all the experiments for $1000$ epochs on machines equipped with an Intel Xeon E5-2640v4 CPU, four Nvidia GTX1080\,Ti GPUs, and 128\,GB of memory.

\paragraph{CIFAR-10}
\begin{wrapfigure}{R}{.5\textwidth}
  \centering
  \includegraphics[width=.5\textwidth]{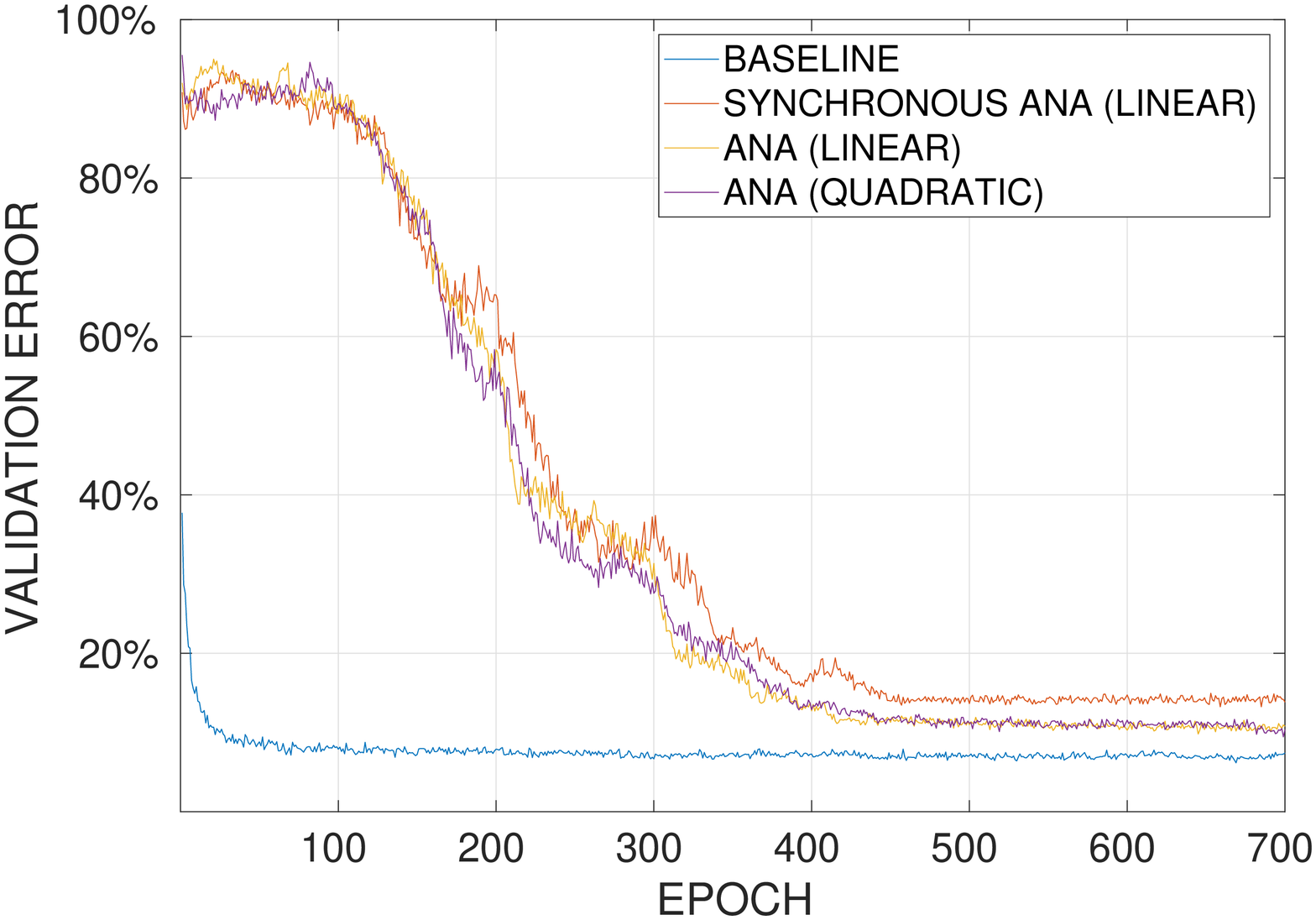}
  \caption{Validation error of the VGG-like CNNs trained on CIFAR-10. The long delay required to start convergence is because validation inference was always performed removing the noise from all the layers, despite its presence during training.}
  \label{fig:cifar10_valid}
\end{wrapfigure}
We used the per-class hinge loss also used by \cite{Hubara2018}, and set a manual learning rate decay schedule in combination with the Adam optimization algorithm \cite{Kingma2014}.
In all our experiments, and for the sake of comparison with related work, we used the same VGG-like architecture used by \cite{Hubara2018} and \cite{Deng2018}, consisting of six convolutional layers followed by three fully connected layers.
The first experiment used the synchronous version of Algorithm~\ref{algo:ana}.
For each layer, we defined a noise decay period of $50$ epochs, during which the standard deviation was reduced linearly from a given starting value down to zero.
This decay period was meant to allow the representations of the corresponding layer to stabilize, before starting the quantization of the following map.
Since the noise has to be removed to get a quantized network, the limitation of synchronous ANA is that it prevents gradients from flowing through multi-step functions and adjusting the lower layers' parameters.
To circumvent this problem, we used forward measures $\mu_{\om^{\l}}^{f}$ and $\mu_{\bt^{\l}}^{f}$ whose controlling standard deviations $\sg_{\om^{\l}}^{f}, \sg_{\bt^{\l}}^{f}$ were linearly or quadratically annealed to zero, and constant backward measures $\mu_{\om^{\l}}^{b}$ and $\mu_{\bt^{\l}}^{b}$ such that $\sg_{\om^{\l}}^{b} = \sg_{\bt^{\l}}^{b} \equiv \bar{\sg} \neq 0$.
Since the network was able to continue training even after the noise had been removed during the forward pass, both the linear and quadratic settings outperformed synchronous annealing, showing no relevant mutual difference.
Validation errors are reported in Figure~\ref{fig:cifar10_valid}.
Validation accuracies are reported in Table~\ref{tab:experiments}.
The exact experimental settings are reported in Appendix~\ref{app:experiments}.

\paragraph{ImageNet}
We used the cross-entropy loss function and a manual learning rate decay schedule in combination with the Adam optimization algorithm.
We experimented on AlexNet \cite{Krizhevsky2012} and MobileNetV2 \cite{Sandler2018}.
A dilemma that arises when quantizing networks that use residual connections, such as MobileNetV2, is how to deal with the algebra between two representation spaces $X^{\l_{1}}$ and $X^{\l_{2}}$ that are joined after a residual branch: in which space should the operation $\x^{\l_{1}} + \x^{\l_{2}}$ take values?
The definition of a suitable algebra satisfying a closure property is not the focus of the present work.
We thus opted for quantizing the residual branches while keeping the bottleneck layers at full-precision (see also \cite{Liu2018}).
Since the last layers of the residual branches are affine transformations, these layers were weight-$Q$-quantized.
We also quantized the input tensors to all the residual branches to ensure the computationally costly convolution operations are performed with ternary arguments.
Although this leads to an only partially quantized model, the operations performed in the residual branches of a MobileNetV2 amount to approximately $70\%$ of the total operations, consequently yielding a significant reduction in computational effort.
Based on the findings from the CIFAR-10 experiments, we used linearly decaying standard deviations in the forward pass and constant non-zero standard deviations in the backward pass, until full quantization was reached, for both AlexNet and MobileNetV2.
Validation accuracies are reported in Table~\ref{tab:experiments}.
The exact experimental settings are reported in Appendix~\ref{app:experiments}.
\begin{table}[ht]
  \caption{Top-1 validation accuracy of ANA on both CIFAR-10 and ImageNet datasets. Quantized models also report the relative accuracy with respect to the corresponding baselines.}
  \label{tab:experiments}
  \centering
  \begin{tabular}{llcc} \toprule
    \multirow{2}{*}{Problem} & \multirow{2}{*}{Network Topology} & \multicolumn{2}{c}{Top-1 Accuracy} \\
                             &                          & Absolute & Relative \\ \midrule
    \multirow{2}{*}{CIFAR-10} & VGG-like Baseline       & 94.40\%  & -       \\
                              & VGG-like                & 90.74\%  & 96.12\% \\ \midrule
    \multirow{4}{*}{ImageNet} & AlexNet Baseline        & 54.71\%  & -       \\
                              & AlexNet                 & 45.80\%  & 83.71\% \\
                              & MobileNetV2 Baseline    & 71.25\%  & -       \\
                              & MobileNetV2 (Residuals) & 64.79\%  & 90.93\% \\ \bottomrule
  \end{tabular}
\end{table}

Comparisons between ANA and similar low-bitwidth (BNNs, TNNs) quantization methods are reported in Table~\ref{tab:comparison}.
We evaluated our method on AlexNet to be able to compare with other algorithms, where we achieved a validation accuracy of 45.8\%.
In comparison to the most widely known STE (41.8\% accuracy) \cite{Hubara2018}, this shows a clear reduction of the accuracy gap to the full-precision network (54.7\% accuracy) by 31\% when training it as a TNN with ANA.
Comparing to a BNN trained with XNOR-Net, our trained TNN generally had a richer configuration space and the achieved top-1 accuracy gain is less distinct.
However, note that XNOR-Net does not quantize the first and last layers, and additional normalization maps have to be computed \cite{Rastegari2016}.
To analyze the application of TNNs to a more recent and compute-optimized network, we evaluated the accuracy of MobileNetV2 with quantized residuals.
This simplifies 70\% of the network's convolution operations to work on ternary operands while introducing an accuracy loss of only 6.46\% from 71.25\% to 64.79\%.
To perform a more direct comparison to a TNN trained with GXNOR-Net \cite{Deng2018}, we evaluate our method on CIFAR-10 with a VGG-like network.
Here we observe a 1.76\% inferior accuracy; however, the scalability of the GXNOR-Net method to deeper networks remains unclear.
Furthermore, we think an exploration of initialization strategies for ANA could further improve the resulting accuracy.
\begin{table}[ht]
  \caption{Comparison between the Top-1 validation accuracies of ANA and similar methods described in QNN literature.}
  \label{tab:comparison}
  \centering
  \begin{threeparttable}
    \begin{tabular}{lccccc} \toprule
%      \multirow{2}{*}{Algorithm} & \multirow{2}{*}{Type} & \multirow{2}{*}{\makecell{$\ph_{\m^{1}}/\ph_{\m^{L}}$\\Quantized}} & CIFAR-10 & \multicolumn{2}{c}{-------- ImageNet --------} \\
      \multirow{2}{*}{Algorithm} & \multirow{2}{*}{Type} & \multirow{2}{*}{\makecell{1st/Last Layers\\Quantized\tnote{1}}} & CIFAR-10 & \multicolumn{2}{c}{-------- ImageNet --------} \\                                 &     &                       & VGG-like & AlexNet & MobileNetV2\tnote{2} \\ \midrule
      STE \cite{Hubara2018}          & BNN & \ding{51} / \ding{51} & 89.85\%  & 41.80\% & -       \\ 
      XNOR-Net \cite{Rastegari2016}  & BNN & \ding{55} / \ding{55} & -        & 44.20\% & -       \\ 
      GXNOR-Net \cite{Deng2018}      & TNN & \ding{51} / \ding{51} & 92.50\%  & -       & -       \\ 
      \textbf{ANA (Ours)}            & TNN & \ding{51} / \ding{51} & 90.74\%  & 45.80\% & 64.79\% \\ \bottomrule
    \end{tabular}
    \begin{tablenotes}
    \item[1] The last layer is actually weight-$Q$-quantized, since it implements an affine map.
    \item[2] Only the residual branches of MobileNetV2 were quantized ($\sim$70\% of MACs).
    \end{tablenotes}
  \end{threeparttable}
\end{table}

\section{Conclusions}
\label{sec:conclusion}
We showed that QNNs are dense in the space of continuous functions defined on a hypercube.
We then investigated the role of probability in gradient-based training of this family of learning machines.
The idea of variational inference is not new in DL \cite{Hinton1993, Graves2011}.
Our original application of probability to DL is the description of the regularization effect played by noise on multi-step functions (Theorem~\ref{th:regularization}).
We established that applying the expectation operator to random functions can return differentiable functions, and applied this general result to the design of a new algorithm to train QNNs (ANA).
In particular, the use of uniform noise allowed an efficient implementation of ANA on top of existing DL frameworks.
We were able to achieve state-of-the-art quantization results applying TNNs to CIFAR-10 and ImageNet, also on the mobile-friendly topology MobileNetV2.

\bibliography{biblio}
\bibliographystyle{ieeetr}

\clearpage
\appendix

\section{Proofs}
\label{app:proofs}
\begin{lemma}\label{th:characteristic}
Let $n_{0} > 0$ be a given integer and let $P$ be a hyperparallelepiped in $\R^{n_{0}}$ defined as the cartesian product of bounded intervals $I_{1}, \dots, I_{n_{0}}$.
Then its characteristic function $\chi_{P}(\x^{0})$ can be represented as a ternary FNN.
\end{lemma}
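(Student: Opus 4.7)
The plan is to exploit the product structure of $P$, express each one-dimensional indicator as a difference of two Heaviside functions, and then combine the $n_{0}$ indicators via a single thresholded sum that realizes the logical AND of the one-dimensional conditions.

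More precisely, write $P = I_{1}\times\cdots\times I_{n_{0}}$ with $I_{j}$ bounded and of endpoints $a_{j}<b_{j}$. For definiteness assume $I_{j} = [a_{j}, b_{j})$, the remaining three cases requiring only cosmetic shifts of the thresholds. Since the standard Heaviside $H$ satisfies $H(x)=1$ iff $x\ge 0$, one has
\begin{equation*}
\chi_{I_{j}}(x_{j}) = H(x_{j}-a_{j}) - H(x_{j}-b_{j}),
\end{equation*}
and because each $\chi_{I_{j}}(x_{j})\in\{0,1\}$ we have $\chi_{P}(\x^{0})=1$ iff $\sum_{j=1}^{n_{0}}\chi_{I_{j}}(x^{0}_{j}) = n_{0}$, which is equivalent to $\sum_{j}\chi_{I_{j}}(x^{0}_{j}) \ge n_{0}-\tfrac{1}{2}$. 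Thus
\begin{equation*}
\chi_{P}(\x^{0}) = H\!\left(\sum_{j=1}^{n_{0}}\bigl[H(x^{0}_{j}-a_{j}) - H(x^{0}_{j}-b_{j})\bigr] - n_{0}+\tfrac{1}{2}\right).
\end{equation*}

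I would then translate this formula into two layers of the form \eqref{eq:phi_classic} with Heaviside activation. The first layer $\ph_{\m^{1}}\,:\,\R^{n_{0}}\to\R^{2n_{0}}$ has the weight matrix $\w^{1}\in\{0,1\}^{2n_{0}\times n_{0}}$ that is the vertical stacking of two copies of the identity (so that each output neuron depends only on a single coordinate $x^{0}_{j}$), with bias vector $\b^{1}$ whose components are $-a_{j}$ and $-b_{j}$, respectively; it outputs the $2n_{0}$ scalars $H(x^{0}_{j}-a_{j})$ and $H(x^{0}_{j}-b_{j})$. The second layer $\ph_{\m^{2}}\,:\,\R^{2n_{0}}\to\R$ is a single neuron with weight row $\w^{2}\in\{-1,+1\}^{1\times 2n_{0}}$ carrying a $+1$ on each output of type $H(x^{0}_{j}-a_{j})$ and a $-1$ on each output of type $H(x^{0}_{j}-b_{j})$, with scalar bias $b^{2}=-n_{0}+\tfrac{1}{2}$. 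Since $\{0,1\},\{-1,+1\}\subset\{-1,0,+1\}$, both layers are weight-$Q$-quantized with $Q=\{-1,0,+1\}$, and both use the ternary activation $\sigma=H$ (codomain $\{0,1\}\subset Q$), so $\Phi_{\hm}=\ph_{\m^{2}}\circ\ph_{\m^{1}}$ is a ternary FNN of the required form.

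There is no substantive obstacle in this argument; the only subtlety is the endpoint convention of the intervals $I_{j}$, which is absorbed into the choice of the biases $-a_{j},-b_{j}$ (and, if an interval is closed on the right or open on the left, into a small shift of the corresponding threshold, or into trading $H(x-\theta)$ for $1-H(\theta-x)$ via suitable sign-flipping in $\w^{1}$). The reduction of the product $\prod_{j}\chi_{I_{j}}$ to an affine-then-Heaviside expression is exactly what makes a single extra layer suffice, and is the step that allows the construction to stay within ternary weights rather than requiring a multiplicative gadget.
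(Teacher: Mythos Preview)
Your proposal is correct and follows essentially the same approach as the paper: a first layer of $2n_{0}$ Heaviside neurons, each detecting one of the $2n_{0}$ half-space constraints defining $P$, followed by a single Heaviside neuron that thresholds their sum to realize the logical AND. The only cosmetic difference is where the minus signs live: the paper encodes the upper-bound conditions via $H(-x_{j}+q_{j})$ (putting the $-1$'s in $\w^{1}$ and using $\w^{2}=\one_{2n_{0}}$, $b^{2}=-2n_{0}$), whereas you encode them via subtracting $H(x_{j}-b_{j})$ (keeping $\w^{1}\in\{0,1\}$ and putting the $-1$'s in $\w^{2}$), and the paper handles open/half-open endpoints by swapping in the variant Heaviside $\sigma^{-}$ rather than perturbing thresholds.
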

\begin{proof}
We prove the specific case where the intervals $I_{i} = [p_{i}, q_{i}]$ are closed.
\begin{equation*}
  \chi_{P}(\x^{0}) = \chi_{\w, \b}(\x^{0}) = \sigma \left( \langle \sigma(\w^{1} \cdot \x^{0} + \b^{1}), \w^{2} \rangle + b^{2} \right) \,,
\end{equation*}
where $\w^{1} \in \{-1, 0, 1\}^{2n_{0} \times n_{0}}, \b^{1} \in \R^{2n_{0}}, \w^{2} = \one_{2n_{0}}$ and $b^{2} = -2n_{0}$.
In particular, $\w^{1}, \b^{1}$ are defined by
\begin{equation*}
  w^{1}_{ij} =
  \begin{cases}
    \delta_{ij}, &\text{if} \,\, 1 \le i \le n_{0} \\
    -\delta_{(i-n_{0})j}, &\text{if} \,\, n_{0}+1 \le i \le 2n_{0}
  \end{cases}
  \quad\text{and}\quad
  b^{1}_{i} =
  \begin{cases}
    p_{i}, &\text{if} \,\, 1 \le i \le n_{0} \\
    q_{i-n_{0}}, &\text{if} \,\, n_{0}+1 \le i \le 2n_{0} \,,
  \end{cases}
\end{equation*}
for $i = 1, \dots, 2n_{0}$ and $j = 1, \dots, n_{0}$, where $\delta_{ij}$ denotes the Kronecker delta.
Let us observe that $\chi_{\w, \b}(\x^{0}) = 1$ if and only if the argument of the outer activation $\sigma$ is non-negative, that is,
\begin{equation*}
  \langle \sigma(\w^{1} \cdot \x^{0} + \b^{1}), \w^{2} \rangle \ge -b^{2} \,,
\end{equation*}
which is satisfied if and only if one has equality in the inequality above, which means
\begin{equation*}
  \sigma(\w^{1} \cdot \x^{0} + \b^{1}) = \w^{2} \,.
\end{equation*}
The latter vectorial equation corresponds to
\begin{equation*}
  \sum_{j = 1}^{n_{0}} w^{1}_{ij} x^{0}_{j} + b^{1}_{i} \ge 0, \,\forall\, i = 1, \dots, 2n_{0} \,,
\end{equation*}
that is, by the definitions of $w^{1}_{ij}$ and $b^{1}_{i}$,
\begin{equation*}
  p_{j} = -b^{1}_{j} \le x^{0}_{j} \le b_{j+n_{0}} = q_{j}, \,\forall\, j = 1, \dots, n_{0} \,.
\end{equation*}
These last inequalities are satisfied if and only if $\x^{0} \in P$.
We observe that if the intervals $I_{1}, \dots, I_{n_{0}}$ in the definition of $P$ are replaced by half-open or even open intervals, the first layer could still detect the parallelepiped $P$.
In fact, it would suffice to replace the suitable activations $\sigma^{1}_{i}, i = 1, \dots, 2n_{0}$ with the Heaviside $\sigma^{-}$ that takes value zero at zero:
\begin{equation*}
  \sigma^{-}(x) = 
  \begin{cases}
    0, &\text{if} \,\, x \leq 0 \\
    1, &\text{if} \,\, x > 0
  \end{cases} \,.
\end{equation*}
This substitution does not alter the quantized structure of the network.
Hence the lemma is proved.
\end{proof}
The fundamental idea of the lemma is thus to interpret the hyperparalleliped $P = \cap_{j = 1}^{n_{0}} \left( \{ \x^{0} | x^{0}_{j} \geq p_{i} \} \cap \{ \x^{0} | x^{0}_{j} \leq q_{i} \} \right)$ as the intersection of closed half-spaces.
A filter $(\w^{1}_{i}, b^{1}_{i})$ with ternary weights can thus be used to measure whether $\x^{0}$ belongs to the corresponding half-space.

\subsection*{Proof of Theorem~\ref{th:approximation}}
\begin{proof}
We start by explicitly constructing a ternary neural network that can represent a function $f$ which is constant on hyper-parallelepipeds.
Let $N$ be a positive integer.
Let $\{ P_{1}, \dots, P_{N} \}$ be a family of closed hyper-parallelepipeds such that $X^{0} = \bigcup_{s=1}^{N} P_{s}$ and $P_{s_{1}} \cap P_{s_{2}} = \emptyset$ (i.e., $\{P_{s}\}_{s = 1, \dots N}$ is a partition of $X^{0}$).
Set now $n_{1} = 2n_{0} N$ and define $\w^{1} = (\w^{1}_{1}, \dots, \w^{1}_{N}), \b = (\b^{1}_{1}, \dots, \b^{1}_{N})$, where $\w^{1}_{s} \in \{-1, 0, 1\}^{2n_{0} \times n_{0}}$ and $\b^{1}_{s} \in \R^{2n_{0}}$ for $s = 1, \dots, N$.
Define
\begin{align*}
  \ph_{\m^{1}} \,:\,
  \R^{n_{0}} &\to \{0,1\}^{N} \\
  \x &\mapsto \left( \chi_{\w^{1}_{1}, \b^{1}_{1}}(\x), \dots, \chi_{\w^{1}_{N}, \b^{1}_{N}}(\x) \right) \,,
\end{align*}
where $\chi_{\w^{1}_{s}, \b^{1}_{s}}$ is the ternary network representation of the half-spaces enclosing $\chi_{P_{s}}$ (i.e., the first layer described in Lemma~\ref{th:characteristic} but applied to every $P_{s}$ in parallel).
Now define $\w^{2} \in \{ 0, 1 \}^{N \times 2n_{0}N}$ and $\b^{2} \in \R^{N}$ such that
\begin{equation*}
  w^{2}_{ij} =
  \begin{cases}
    1, &\text{if} \,\, 2n_{0}(i-1) + 1 \leq j \leq 2n_{0}i \\
    0, &\text{otherwise}
\end{cases}
\end{equation*}
and
\begin{equation*}
  b^{2}_{i} = -2n_{0}
\end{equation*}
for all $i = 1, \dots N$.
The map $\ph_{m^{2}} \circ \ph_{m^{1}}$ thus measures the membership of a point $\x^{0}$ to the hyper-parallelepipeds $P_{s}$.
Since $\{P_{s}\}_{s = 1, \dots N}$ partitions $X^{0}$, just one neuron can be active at a time.
Finally, for a given $\w^{3} \in \R^{N}$ we define the affine map
\begin{equation*}
  \ph_{\m^{3}}(\x^{2}) = \langle \w^{3}, \x^{2} \rangle \,.
\end{equation*}
Finally, we set
\begin{equation}
  \Phi_{\hm}(\x^{0}) = \ph_{\m^{3}} \circ \ph_{\m^{2}} \circ \ph_{\m^{1}} (\x^{0}) \,.
\end{equation}
Let now $f \in \Lip_{\lambda}(X^{0})$ be fixed.
Let $n$ be an integer that satisfies
\begin{equation*}
  n \ge \frac{2 \sqrt{n_{0}} S \lambda}{\epsilon} \,,
\end{equation*}
then choose $N = n^{n_{0}}$.
Consider the family of closed hypercubes $P_{s}$ with side length $\delta = S/n$, forming a partition $\{ P_{s} \}_{s = 1, \dots, N}$ of $X^{0}$.
For every $s = 1, \dots, N$ we can identify the hypercube $P_{s}$ by the index tuple $(i^{s_{0}}, \dots, i^{s_{n_{0}-1}})$ whose $n_{0}$ components are the unique integers $i^{s_{k}}\in \{ 0, \dots, n_{0}-1 \}$ such that 
\begin{equation*}
  s-1 = i^{s_{0}} + i^{s_{1}} n + i^{s_{2}} n^{2} + \dots + i^{s_{n_{0}-1}} n^{n_{0}-1} \,.
\end{equation*}
Then the hypercube $P_{s}$ is given by
\begin{equation*}
  P_{s} = [i^{s_{0}}\delta, (i^{s_{0}}+1)\delta] \times \dots \times [i^{s_{n_{0}-1}}\delta, (i^{s_{n_{0}-1}}+1)\delta]\,.
\end{equation*}
Define $w^{3}_{s}$ as the integral average of $f$ on $P_{s}$ for each $s = 1, \dots, N$, so that $\w^{3} = (w^{3}_{1}, \dots, w^{3}_{N})$. Define $\Phi_{\hm}$ as above, but applying Lemma \ref{th:characteristic} with $P = P_{s}$ for $s = 1, \dots, N$.
We are now left with showing that 
\begin{equation}\label{eq:approximation}
  |\Phi_{\hm}(\x^{0}) - f(\x^{0})| \le \epsilon \,\, \forall \, \x^{0} \in X^{0} \,.
\end{equation} 
Let $s \in \{ 1, \dots, N \}$ be such that $\x^{0} \in P_{s}$.
Then $\Phi_{\hm}(\x^{0}) = \ph_{\m^{3}}(\ph_{\hm^{2}}(\x^{0})) = w^{3}_{s} = f(\x^{0}_{s})$ for some $\x^{0}_{s} \in P_{s}$, hence by the Lipschitz property of $f$ we obtain
\begin{equation}\label{eq:estimate}
  |\Phi_{\hm}(\x^{0}) - f(\x^{0})| = |f(\x^{0}_{s}) - f(\x^{0})| \leq \lambda |\x^{0}_{s} - \x^{0}| \leq L \diam(P_{s}) = \lambda\sqrt{n_{0}}S/n \le \epsilon/2 < \epsilon \,.
\end{equation}
Since \eqref{eq:estimate} holds for every $\x^{0} \in X^{0}$, we obtain \eqref{eq:approximation}, as wanted.
\end{proof}

\subsection*{Proof of Theorem~\ref{th:regularization}}
We first introduce the essential definitions and notation needed to understand the statement of Theorem~\ref{th:regularization} and its proof.
For further details, see for instance \cite{Ambrosio2000}.
Given a measurable function $f \,:\, \R \to \R$, we say that $f \in L^{p}(\R)$ for $1 \le p < \infty$ if its $p$-norm $\|f\|_p = \left( \int_{\R} |f(x)|^{p} dx \right)^{1/p}$ is finite.
We say that $f \in L^{\infty}(\R)$ if its $\infty$-norm $\|f\|_\infty$, defined as the infimum of the values $t > 0$ such that the set $\{ x \in \R | |f(x)| > t \}$ has zero Lebesgue measure, is finite.
Given $f \in L^{1}(\R)$, we call \textit{distributional derivative} of $f$ the (linear, continuous) functional $Df$ defined by
\begin{equation*}
  Df(\phi) := -\int_{\R} f(x) \phi'(x) dx
\end{equation*}
on any $C^{\infty}$-smooth test function $\phi$ that vanishes outside some compact interval of $\R$.
When $Df$ is represented by a function, that is, there exists $g \in L^{1}(\R)$ such that we have the integration by parts formula
\begin{equation*}
  Df(\phi) = \int_{\R} g(x) \phi(x) dx\,,
\end{equation*}
we say that $f$ belongs to the \textit{Sobolev space} $W^{1,1}(\R)$, and that $Df = g$ is the \textit{weak derivative} of $f$.
Similarly, but more generally, when $Df$ is represented via integration by parts by a signed Borel-regular measure on $\R$, whose total variation $|Df|(\R)$ (roughly speaking, a generalization of the $L^{1}$-norm of the derivative of $f$) is finite, then we say that $f$ is a function of \textit{bounded variation}, i.e., that it belongs to the space $BV(\R)$.
For instance, the characteristic function $\chi_{(a,b)}$ of a bounded interval $(a,b) \subset \R$ is a BV function, and its distributional derivative $D\chi_{(a,b)}$ is given by the signed measure $\delta_{a} - \delta_{b}$, where $\delta_{q}$ denotes the Dirac's delta measure centered at $q$.
In order to provide an interpretation of STE, and in accordance with the choices of the noise in our experiments, we assume the noise density $\mu$ to be either a Sobolev or, more generally, a BV function on $\R$.
Hence, its distributional derivative $D\mu$ will be either an $L^{1}$ function or a signed measure with finite total variation.
Of course, Theorem~\ref{th:continuity} includes the special case when $\mu$ is smooth.
\begin{proof}[Proof of Theorem \ref{th:regularization}]
The first claim (i) simply follows from the change of variables $t = x + \nu$:
\begin{align*}
  \E_{\mu}[\sigma_{\nu}](x)
  &= \int_{\R} \sigma(x +\nu) \mu(-\nu) d\nu = \int_{\R} \sigma(t) \mu(x-t) dt = (\sigma \ast \mu)(x), \,\forall\, x \in \R\,.
\end{align*}
For the second claim (ii), we fix a test function $\phi \in C^{\infty}_{c}(\R)$ and note that
\begin{align*}
  (\E[D\sigma_{\nu}], \phi)
  &= \E[(D\sigma_{\nu}, \phi)]
  &= -\int_{\R} \int_{\R} \sigma_{\nu}(x) \phi'(x) dx \mu(-\nu) d\nu \,.
\end{align*}
By Fubini's Theorem we can exchange the order of integration and get
\begin{align}\label{eq:fubini}
  (\E[D\sigma_{\nu}], \phi)
  &= -\int_{\R} \sigma \ast \mu(x) \phi'(x) dx \,,
\end{align}
which shows the first equality $\E[D\sigma_{\nu}] = D\E[\sigma_{\nu}]$.
Then, being $\mu$ a Sobolev function easily implies that $\sigma \ast \mu$ is continuously differentiable and one has $\frac{d}{dx} \sigma \ast \mu = \sigma \ast D\mu$, hence integrating by parts in \eqref{eq:fubini} gives the second equality $D\E[\sigma_\nu] = \sigma \ast D\mu$.
The third, and last, claim (iii) follows from similar computations as in the proof of (ii).
\end{proof}
\begin{figure}[ht]
  \centering
  \includegraphics[width=\textwidth]{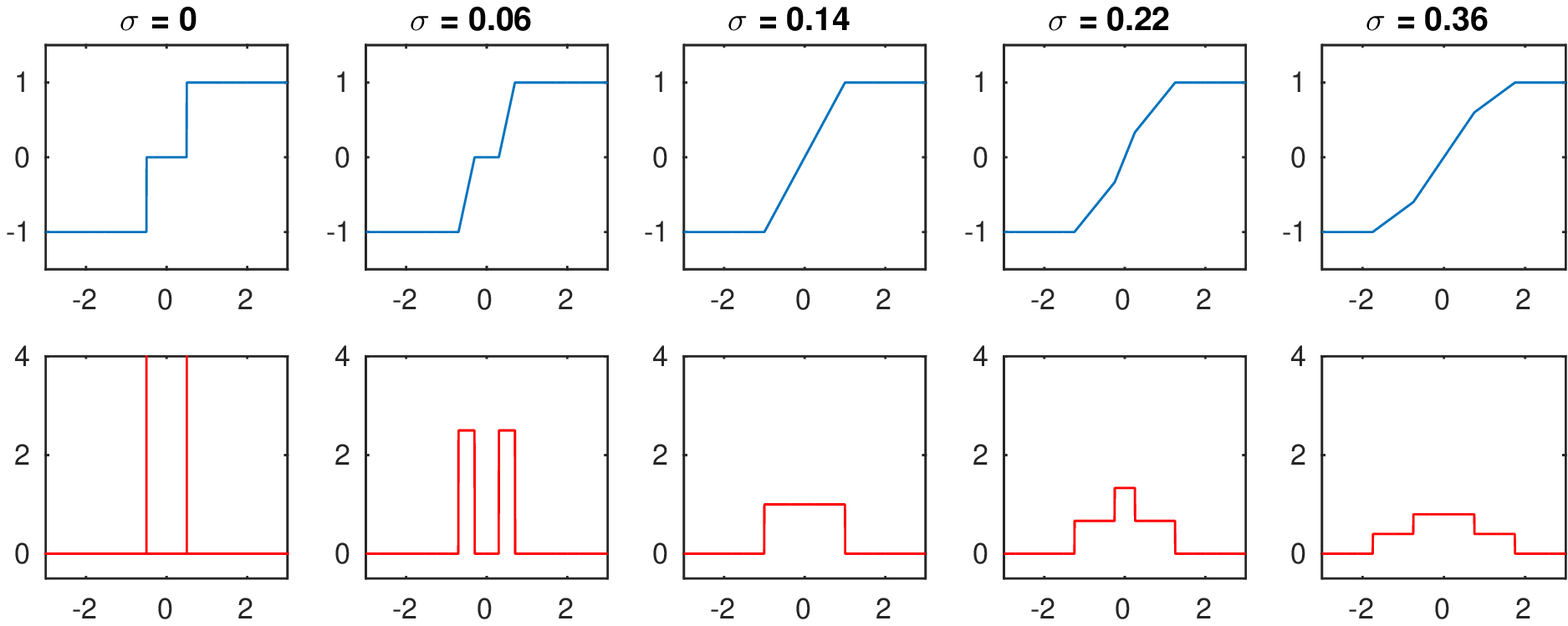}
  \caption{Representation of a regularized quantization function (top) and its derivative (bottom) when uniform additive noise (with different standard deviations) is applied to the argument. The multistep function depicted is a ternary function with $\Theta = \{-0.5, +0.5\}$ and $Q = \{-1, 0, +1\}$.}
  \label{fig:uniform}
\end{figure}
\begin{figure}[ht]
  \centering
  \includegraphics[width=\textwidth]{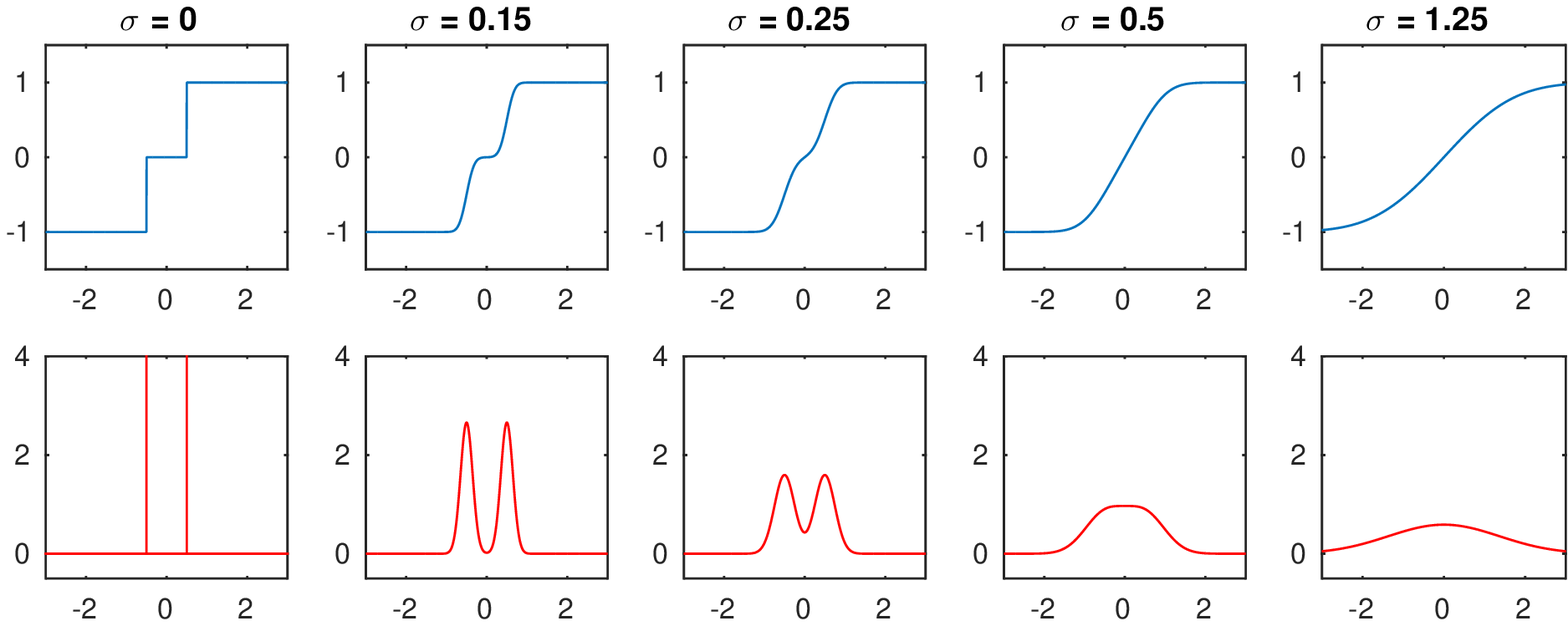}
  \caption{Representation of the regularized quantization function (top) and its derivative (bottom) when Gaussian additive noise (with different standard deviations) is applied to the argument. The multistep function depicted is a ternary function with $\Theta = \{-0.5, +0.5\}$ and $Q = \{-1, 0, +1\}$.}
  \label{fig:gaussian}
\end{figure}

\subsection*{Proof of Theorem~\ref{th:continuity}}
It is worth recalling the notion of convergence for a (probability) measure that is referred to in the theorem.
We say that a sequence $\mu_{t}$ of Borel measures restricted to a compact subset $\Xi$ of the Euclidean $n$-space converges weakly-$*$ to a Borel measure $\mu$ on $\Xi$ if, for every continuous function $\phi \,:\, \Xi \to \R$, one has $\E_{\mu_{t}}[\phi] \to \E_{\mu}[\phi]$ as $t \to \infty$.
We adopt hereafter the general notation \eqref{eq:composed_map} for a parametric composition of maps, as Theorem~\ref{th:continuity} applies to this situation as well.
We recall in particular the map $\Psi_{\hm^{\l}} \,:\, X^{0} \to X^{\l}$ defined as $\Psi_{\hm^{\l}} = \psi_{\m^{\l}} \circ \cdots \circ \psi_{\m^{1}}$, as it will play a role in the induction argument below.
\begin{proof}[Proof of Theorem~\ref{th:continuity}]
First of all we observe that the continuity assumption on $\psi^{\l}$ implies the existence of a modulus of continuity $\eta$ (i.e., $\eta \,:\, [0, +\infty) \to [0, +\infty)$ is continuous, strictly increasing, and satisfies $\eta(0) = 0$) such that  
\begin{equation}\label{eq:unifcont}
  \sup_{\xx^{\l} \in \Xi^{\l}} |\psi_{\xx^{\l}}(\x) - \psi_{\xx^{\l}}(\y)| \le \eta(|\x-\y|), \,\forall\, \x, \y \in X^{\l-1}, \,\forall\, \l = 1, \dots, L \,.
\end{equation}
Denote by $\E_{\l, t}$ the expectation operator associated with the measure $\mu^{\l}_{t}$.
We proceed by induction on $\l = 1, \dots, L$.
The basis of the induction consists in showing that 
\begin{equation}\label{eq:inductionbasis}
  \lim_{t \to \infty} \E_{1, t}[\psi_{\xx^{1}}](\x) = \psi_{\m^{1}}(\x), \,\forall\, \x \in X^{0} \,.
\end{equation}
We observe that
\begin{equation*}
  \E_{1, t}[\psi_{\xx^{1}}](\x) = \int_{\Xi^{1}} \psi(\xx^{1}, \x) d\mu^{1}_{t}(\xi^{1}) \,,
\end{equation*}
hence \eqref{eq:inductionbasis} directly follows from the definition of convergence of $\mu^{1}_{t}$ to $\delta_{\m^{1}}$.
In the next, inductive step we shall apply the uniform continuity assumption \eqref{eq:unifcont} in an essential way.
Let us set 
\begin{equation*}
  J_{\l, t}(\x) = \E_{\l, t}[\psi_{\xx^{\l}}] \circ \cdots \circ \E_{1, t}[\psi_{\xx^{1}}](\x)
\end{equation*} 
and assume by induction that 
\begin{equation}\label{eq:inductivehp}
  \lim_{t \to \infty} J_{\l, t}(\x) = \Psi_{\hm^{\l}}(\x) \,\forall\, \x \in X^{0} \,.
\end{equation}
We then have to prove that, for some $1 \le \l < L$ and for all $\x \in X^{0}$,
\begin{equation}\label{eq:inductivestep}
  \lim_{t \to \infty} \E_{\l+1, t}[\psi_{\xx^{\l+1}}] \circ J_{\l, t}(\x) = \psi_{\m^{\l+1}} \circ \Psi_{\hm^{\l}}(\x) \,.
\end{equation}
Let us estimate
\begin{align*}
  \Big| \E_{\l+1, t}[\psi_{\xx^{\l}}] \circ J_{\l, t}(\x) - \psi_{\m^{\l+1}} \circ \Psi_{\hm^{\l}}(\x) \Big|
  &= \left| \int_{\Xi^{\l+1}} \Big( \psi_{\xx^{\l+1}}(J_{\l, t}(\x)) - \psi_{\m^{\l+1}}(\Psi_{\hm^{\l}}(\x)) \Big) d\mu^{\l+1}_{t}(\xx^{\l+1}) \right| \\
  &\le \eta(|J_{\l, t}(\x) - \Psi_{\hm^{\l}}(\x)|) \,,
\end{align*}
where the last inequality follows from \eqref{eq:unifcont} and the fact that $\mu^{\l+1}_{t}$ is a probability measure.
It is then immediate to see that the last term of the previous estimate goes to zero by \eqref{eq:inductivehp}.
This proves \eqref{eq:inductivestep} and completes the proof of the theorem. 
\end{proof}

\clearpage
\section{Experimental setup}
\label{app:experiments}
\subsection*{CIFAR-10}
CIFAR-10 consists of $32 \times 32$ pixels RGB images grouped into ten classes.
It comprises $50k$ training points and $10k$ test points.
For our experiments, we split the $50k$ images training set in a $45k$ images actual training set and a $5k$ validation set.
We performed data augmentation by resizing, random cropping and random flipping.
The resizing and random cropping were implemented using PyTorch's \texttt{torchvision.transforms.RandomCrop} function with parameter \texttt{padding=4}.
The resulting image then was randomly flipped using PyTorch's \texttt{torchvision.transforms.RandomHorizontalFlip}.
The preprocessing consisted of a normalization of the three channels described by means $\mu = (0.4914, 0.4822, 0.4465)$ and standard deviations $\sg = (0.2470, 0.2430, 0.2610)$.
The learning rate was initialized to $0.001$ and decreased to $0.0001$ at epoch $700$.
The batch size was set to $256$.
In the synchronous setting, the standard deviations regulating the noises of the layers $\ph_{\xx^{\l}}, \l = 1, \dots, L$ were initialized to $\sg_{\om^{\l}}^{f} = \sg_{\om^{\l}}^{b} = \sg_{\bt^{\l}}^{f} = \sg_{\bt^{\l}}^{b} = \sg_{\l} = \sqrt{3}/6$ (in order to describe the uniform distribution $\mathcal{U}[-1, +1]$) and annealed following the linear decay
\begin{equation*}
  \sg_{\l} = 1 - \frac{\min(\max(0, t-50(\l-1)), 50}{50} \,,
\end{equation*}
where $t$ represents the training epoch.
The idea was that $\sg_{\l}$ should decay from $\sqrt{3}/6$ at epoch $50(\l-1)$ to zero at epoch $50\l$.
In the asynchronous setting, the standard deviations regulating the noises were again initialized to $\sg_{\om^{\l}}^{f} = \sg_{\om^{\l}}^{b} = \sg_{\bt^{\l}}^{f} = \sg_{\bt^{\l}}^{b} = \sqrt{3}/6$.
We annealed the forward noises' standard deviations trying both, the already described linear decay and the quadratic decay
\begin{equation*}
  \sg_{\l} = \left( 1 - \frac{\min(\max(0, t-50(\l-1)), 50}{50} \right)^{2} \,.
\end{equation*}
The standard deviations regulating the backward noises were kept constant.
\begin{table}[ht]
  \caption{The VGG-like network used for CIFAR-10 experiments. For each map, $n$ indicates the number of filters, $p$ the (symmetric) padding size, $k$ the (square) kernel spatial side and $s$ the stride in both dimensions.}
  \label{tab:vgg-like}
  \centering
  \begin{tabular}{|l|l|l|c|c|c|c|l|} \hline
    Layer & Input Shape & Type & n & p & k & s & Output Shape \\ \hline
    \hline
    \multirow{3}{2em}{$\ph^{1}$} & \multirow{3}{7em}{\threedtnsrshp{32}{32}{3}} & Conv2d      & 128 & 1 & 3 & 1 & \multirow{3}{7em}{\threedtnsrshp{32}{32}{128}} \\ \cline{3-7}
                                 & & BatchNorm2d & \multicolumn{4}{c|}{-} & \\ \cline{3-7}
                                 & & QuantAct    & \multicolumn{4}{c|}{-} & \\ \hline
    \multirow{4}{2em}{$\ph^{2}$} & \multirow{4}{7em}{\threedtnsrshp{32}{32}{128}} & Conv2d      & 128 & 1 & 3 & 1 & \multirow{4}{7em}{\threedtnsrshp{16}{16}{128}} \\ \cline{3-7}
                                 & & MaxPool2d   & - & 0 & 2 & 2 & \\ \cline{3-7}
                                 & & BatchNorm2d & \multicolumn{4}{c|}{-} & \\ \cline{3-7}
                                 & & QuantAct    & \multicolumn{4}{c|}{-} & \\ \hline
    \multirow{3}{2em}{$\ph^{3}$} & \multirow{3}{7em}{\threedtnsrshp{16}{16}{128}} & Conv2d      & 256 & 1 & 3 & 1 & \multirow{3}{7em}{\threedtnsrshp{16}{16}{256}} \\ \cline{3-7}
                                 & & BatchNorm2d & \multicolumn{4}{c|}{-} & \\ \cline{3-7}
                                 & & QuantAct    & \multicolumn{4}{c|}{-} & \\ \hline
    \multirow{4}{2em}{$\ph^{4}$} & \multirow{4}{7em}{\threedtnsrshp{16}{16}{256}} & Conv2d      & 256 & 1 & 3 & 1 & \multirow{4}{7em}{\threedtnsrshp{8}{8}{256}} \\ \cline{3-7}
                                 & & MaxPool2d   & - & 0 & 2 & 2 & \\ \cline{3-7}
                                 & & BatchNorm2d & \multicolumn{4}{c|}{-} & \\ \cline{3-7}
                                 & & QuantAct    & \multicolumn{4}{c|}{-} & \\ \hline
    \multirow{3}{2em}{$\ph^{5}$} & \multirow{3}{7em}{\threedtnsrshp{8}{8}{256}} & Conv2d      & 512 & 1 & 3 & 1 & \multirow{3}{7em}{\threedtnsrshp{8}{8}{512}} \\ \cline{3-7}
                                 & & BatchNorm2d & \multicolumn{4}{c|}{-} & \\ \cline{3-7}
                                 & & QuantAct    & \multicolumn{4}{c|}{-} & \\ \hline
    \multirow{4}{2em}{$\ph^{6}$} & \multirow{4}{7em}{\threedtnsrshp{8}{8}{512}} & Conv2d      & 512 & 1 & 3 & 1 & \multirow{4}{7em}{\threedtnsrshp{4}{4}{512}} \\ \cline{3-7}
                                 & & MaxPool2d   & - & 0 & 2 & 2 & \\ \cline{3-7}
                                 & & BatchNorm2d & \multicolumn{4}{c|}{-} & \\ \cline{3-7}
                                 & & QuantAct    & \multicolumn{4}{c|}{-} & \\ \hline
    \multirow{3}{2em}{$\ph^{7}$} & \multirow{3}{7em}{\onedtnsrshp{8192}} & FC & 1024 & \multicolumn{3}{c|}{-} & \multirow{3}{7em}{\onedtnsrshp{1024}} \\ \cline{3-7}
                                 & & BatchNorm1d & \multicolumn{4}{c|}{-} & \\ \cline{3-7}
                                 & & QuantAct    & \multicolumn{4}{c|}{-} & \\ \hline
    \multirow{3}{2em}{$\ph^{8}$} & \multirow{3}{7em}{\onedtnsrshp{1024}} & FC & 1024 & \multicolumn{3}{c|}{-} & \multirow{3}{7em}{\onedtnsrshp{1024}} \\ \cline{3-7}
                                 & & BatchNorm1d & \multicolumn{4}{c|}{-} & \\ \cline{3-7}
                                 & & QuantAct    & \multicolumn{4}{c|}{-} & \\ \hline
    \multirow{2}{2em}{$\ph^{9}$} & \multirow{2}{7em}{\onedtnsrshp{1024}} & FC & 10 & \multicolumn{3}{c|}{-} & \multirow{2}{7em}{\onedtnsrshp{10}} \\ \cline{3-7}
                                 & & BatchNorm1d & \multicolumn{4}{c|}{-} & \\ \hline
  \end{tabular}
\end{table}

\subsection*{ImageNet}
ImageNet consists of $224 \times 224$ pixels RGB images grouped into $1000$ classes.
It comprises $1.2M$ training points and $50k$ validation points.
We performed data augmentation using a pipeline of random resizing and cropping, random flipping, random colours alterations and random PCA-based lighting changes.
The random cropping was implemented using PyTorch's \texttt{torchvision.transforms.RandomResizedCrop} function with default parameters.
Then, this new image underwent a random flipping using PyTorch's \texttt{torchvision.transforms.RandomHorizontalFlip}.
The colour alterations consisted of random changes of brightness, contrast and saturation (all are linear interpolations between the original image and transformations of its greyscale version).
Finally, the lighting changes were performed by random scaling of the three RGB channels, with coefficients depending on the eigenvalues obtained applying a per-pixel PCA on the ImageNet dataset.
The preprocessing consisted of a normalization of the three channels described by means $\mu = (0.485, 0.456, 0.406)$ and standard deviations $\sg = (0.229, 0.224, 0.225)$.
The learning rate was initialized to $0.001$ and decreased to $0.0001$ at epoch $700$, both for AlexNet and MobileNetV2.
The batch size was set to $512$.
The standard deviations regulating the noises of the layers $\ph_{\xx^{\l}}, \l = 1, \dots, L$ were initialized to $\sg_{\om^{\l}}^{f} = \sg_{\om^{\l}}^{b} = \sg_{\bt^{\l}}^{f} = \sg_{\bt^{\l}}^{b} = \sg_{\l} = \sqrt{3}/6$.
On AlexNet, we annealed the forward noises' standard deviations using the \textit{delayed} linear decay
\begin{equation*}
  \sg_{\l} = 1 - \frac{\min(\max(0, t-50(\l)), 50}{50} \,,
\end{equation*}
where $t$ represents the training epoch.
The idea was that $\sg_{\l}$ should have decayed from $\sqrt{3}/6$ at epoch $50\l$ to zero at epoch $50(\l+1)$.
On MobileNetV2, we annealed the forward noises' standard deviations using the linear decay
\begin{equation*}
  \sg_{\l} = 1 - \frac{\min(\max(0, t-50\l), 50}{50} \,.
\end{equation*}
The idea was that $\sg_{\l}$ should decay from $\sqrt{3}/6$ at epoch $50(\l-1)$ to zero at epoch $50\l$.
The standard deviations regulating the backward noises were kept constant at $\sqrt{3}/6$.
\begin{table}[ht]
  \caption{The AlexNet model used in part of the experiments on ImageNet. For each map, $n$ indicates the number of filters, $p$ the (symmetric) padding size, $k$ the (square) kernel spatial side and $s$ the stride in both dimensions.}
  \label{tab:alexnet}
  \centering
  \begin{tabular}{|l|l|l|c|c|c|c|l|} \hline
    Layer & Input Shape & Type & n & p & k & s & Output Shape \\ \hline
    \hline
    \multirow{4}{2em}{$\ph^{1}$} & \multirow{4}{7em}{\threedtnsrshp{227}{227}{3}} & Conv2d      & 64 & 2 & 11 & 4 & \multirow{4}{7em}{\threedtnsrshp{27}{27}{64}} \\ \cline{3-7}
                                 & & MaxPool2d   & - & 0 & 3 & 2 & \\ \cline{3-7}
                                 & & BatchNorm2d & \multicolumn{4}{c|}{-} & \\ \cline{3-7}
                                 & & QuantAct    & \multicolumn{4}{c|}{-} & \\ \hline
    \multirow{4}{2em}{$\ph^{2}$} & \multirow{4}{7em}{\threedtnsrshp{27}{27}{64}} & Conv2d      & 192 & 2 & 5 & 1 & \multirow{4}{7em}{\threedtnsrshp{13}{13}{192}} \\ \cline{3-7}
                                 & & MaxPool2d   & - & 0 & 3 & 2 & \\ \cline{3-7}
                                 & & BatchNorm2d & \multicolumn{4}{c|}{-} & \\ \cline{3-7}
                                 & & QuantAct    & \multicolumn{4}{c|}{-} & \\ \hline
    \multirow{3}{2em}{$\ph^{3}$} & \multirow{3}{7em}{\threedtnsrshp{13}{13}{192}} & Conv2d      & 384 & 1 & 3 & 1 & \multirow{3}{7em}{\threedtnsrshp{13}{13}{384}} \\ \cline{3-7}
                                 & & BatchNorm2d & \multicolumn{4}{c|}{-} & \\ \cline{3-7}
                                 & & QuantAct    & \multicolumn{4}{c|}{-} & \\ \hline
    \multirow{3}{2em}{$\ph^{4}$} & \multirow{3}{7em}{\threedtnsrshp{13}{13}{384}} & Conv2d      & 256 & 1 & 3 & 1 & \multirow{3}{7em}{\threedtnsrshp{13}{13}{256}} \\ \cline{3-7}
                                 & & BatchNorm2d & \multicolumn{4}{c|}{-} & \\ \cline{3-7}
                                 & & QuantAct    & \multicolumn{4}{c|}{-} & \\ \hline
    \multirow{4}{2em}{$\ph^{5}$} & \multirow{4}{7em}{\threedtnsrshp{13}{13}{256}} & Conv2d      & 256 & 1 & 3 & 1 & \multirow{4}{7em}{\threedtnsrshp{6}{6}{256}} \\ \cline{3-7}
                                 & & MaxPool2d   & - & 0 & 3 & 2 & \\ \cline{3-7}
                                 & & BatchNorm2d & \multicolumn{4}{c|}{-} & \\ \cline{3-7}
                                 & & QuantAct    & \multicolumn{4}{c|}{-} & \\ \hline
    \multirow{3}{2em}{$\ph^{6}$} & \multirow{3}{7em}{\onedtnsrshp{9216}} & FC & 4096 & \multicolumn{3}{c|}{-} & \multirow{3}{7em}{\onedtnsrshp{4096}} \\ \cline{3-7}
                                 & & BatchNorm1d & \multicolumn{4}{c|}{-} & \\ \cline{3-7}
                                 & & QuantAct    & \multicolumn{4}{c|}{-} & \\ \hline
    \multirow{3}{2em}{$\ph^{7}$} & \multirow{3}{7em}{\onedtnsrshp{4096}} & FC & 4096 & \multicolumn{3}{c|}{-} & \multirow{3}{7em}{\onedtnsrshp{4096}} \\ \cline{3-7}
                                 & & BatchNorm1d & \multicolumn{4}{c|}{-} & \\ \cline{3-7}
                                 & & QuantAct    & \multicolumn{4}{c|}{-} & \\ \hline
    \multirow{2}{2em}{$\ph^{8}$} & \multirow{2}{7em}{\onedtnsrshp{4096}} & FC & 1000 & \multicolumn{3}{c|}{-} & \multirow{2}{7em}{\onedtnsrshp{1000}} \\ \cline{3-7}
                                 & & BatchNorm1d & \multicolumn{4}{c|}{-} & \\ \hline
  \end{tabular}
\end{table}

\end{document}